\newtheorem{thm}{Theorem}
\newtheorem{lem}[thm]{Lemma}
\newtheorem{cor}[thm]{Corollary}
\newtheorem{rem}{Remark}
\newtheorem{assumption}{Assumption}
\newcommand{\inner}[1]{\left\langle #1\right\rangle}
\newcommand{\R}{\mathbb{R}}
\newcommand{\ab}{\mathbf{a}}
\newcommand{\phib}{\boldsymbol{\phi}}
\newcommand{\xb}{\mathbf{x}}
\newcommand{\zb}{\mathbf{z}}
\newcommand{\ub}{\mathbf{u}}
\newcommand{\vb}{\mathbf{v}}
\newcommand{\PP}{\mathbb{P}}
\newcommand{\E}{\mathbb{E}}
\algnewcommand{\Inputs}{%
	\State \textbf{Inputs:}
}
\algnewcommand{\Initialize}{%
	\State \textbf{Initialize:}
}
\algnewcommand{\Outputs}{%
	\State \textbf{Outputs:}
}
\algnewcommand{\ForLoop}{%
	\textbf{For $j=1,2,...,m$ do}
}
\algnewcommand{\ForOuterLoop}{%
	\textbf{For $k=1,2,..., K$ do}
}
\algnewcommand{\ForLoopKZPT}{%
	\textbf{For $j=1,2,..., p \lfloor {m \over p}\rfloor $ do}
}
\algnewcommand{\ForLoopmod}{%
	\textbf{For $L=0,1,...,L-1$ do}
}
\algnewcommand{\ForEnd}{%
	\textbf{End For}
}
\algnewcommand{\Iterate}{%
	\State \textbf{Iterate:}
}
\begin{document}

\title{Stochastic gradient descent for streaming linear and rectified linear systems with adversarial corruptions}

\author{
Halyun Jeong\thanks{University of California Los Angeles, Department of Mathematics, Los Angeles, CA 90095, and The State University of New York at Albany,  Department of Mathematics \& Statistics, Albany, NY 12222, (\email{hjeong2@albany.edu}) } \and Deanna Needell\thanks{University of California Los Angeles, Department of Mathematics, Los Angeles, CA 90095
(\email{deanna@math.ucla.edu})}
 \and Elizaveta Rebrova
\thanks{Princeton University, Operations research (ORFE) Department,Sherrerd Hall, Charlton Street, Princeton, NJ 08544
(\email{elre@princeton.edu})}}
\maketitle

\begin{abstract}
We propose SGD-exp, a stochastic gradient descent approach for linear and ReLU regressions under Massart noise (adversarial semi-random corruption model) for the fully streaming setting. We show novel nearly linear convergence guarantees of SGD-exp to the true parameter with up to $50\%$ Massart corruption rate, and with any corruption rate in the case of symmetric oblivious corruptions. This is the first convergence guarantee result for robust ReLU regression in the streaming setting, and it shows the improved convergence rate over previous robust methods for $L_1$ linear regression due to a choice of an exponentially decaying step size, known for its efficiency in practice. Our analysis is based on the drift analysis of a discrete stochastic process, which could also be interesting on its own.
\end{abstract}

	% keywords can be removed
	%\keywords{First keyword \and Second keyword \and More}

\section{Introduction}
Robust regression aims to develop regression methods that provide a proper fit of the data, even in the presence of outliers.  Such outliers can arise, for example, from labeling mistakes during the data-gathering process \cite{brodley1999identifying}, incorrect measurements in tomography \cite{lin2015quantifying}, transmission errors \cite{gogoi2011survey}, or adversarial attacks \cite{biggio2012poisoning, steinhardt2017certified} in distributed machine learning. On the other hand, the high-dimensionality and just sheer volume of modern datasets pose significant challenges in machine learning. Stochastic algorithms such as stochastic gradient descent (SGD) are standard approaches to address this high dimensionality, as they utilize only a part of the dataset at a time \cite{gurbuzbalaban2021heavy, tsai2022heavy, diakonikolas2022streaming}. Moreover, in many modern applications, data is streamed, meaning that methods cannot retain past data and are only allowed to work with a given small portion of data at a time in such scenarios. In the fully streaming setting, an algorithm typically processes one data point at a time and cannot revisit past data points \cite{haselby2023fast, pesme2020online}. Such constraints arise from limited memory, the immediacy of real-time processing, or just the vast volume of the data.
These challenges have naturally led to the development of robust regression methods suited for the streaming setting.

\subsection{Robust linear and ReLU regression}

Robust regression has a long history and has played an important role in both statistics and machine learning. Its goal is to learn the fitting parameters, even when the observations are contaminated by a constant fraction of adversarial outliers \cite{tukey1960survey,huber1992robust, chen2013robust, klivans2018efficient, diakonikolas2019efficient,  liu2020high}. In particular, fitting generalized linear models is a fundamental subject in statistics \cite{kalai2009isotron, kakade2011efficient}. One important case arises when the nonlinear function is the ReLU activation function. Recently, ReLU regression has garnered a lot of attention due to its relevance to neural networks \cite{soltanolkotabi2017learning, yehudai2020learning, karmakar2022provable, diakonikolas2022learning}. 

In the non-streaming setting, robust linear regression can be formulated in the following way.
For a system of linear equations $A\xb = {\bf b}$, suppose that a certain fraction of entries in ${\bf b}$ are replaced with arbitrary values so that we have $A\xb = {\bf \tilde{b}} = {\bf b} + {\bf e}$ for some error vector ${\bf e}$ instead. 
Recently, Haddock et al. \cite{haddock2022quantile} have proposed two methods based on the quantile estimate of residual of the current iterate, one is based on the randomized Kaczmarz algorithm, and the other is based on the $\ell_1$ loss function which is known to be robust to outliers in general \cite{huber1987place, koenker1978regression}.  

 Although these methods are effective in solving corrupted systems of linear equations, they may not be suitable for the time-sensitive streaming setting, due to the possible computational burden in estimating quantiles. Possible ways to alleviate these time/memory requirements for estimation include using a sliding window \cite{haddock2022quantile}, approximate quantile estimation \cite{haddock2023subsampled}, and in more general SGD context, choosing the data point with the lowest loss value after observing a sufficiently large number of data points per iteration \cite{shah2020choosing}. However, none of these methods would apply to the fully streaming scenario, where accessing any past data is not possible. At the same time, the corruption model considered in these works might be too strong for the streaming setting. Essentially, it implies that in the worst case a presumed adversary can examine all past equations to choose the worst possible placement of corrupted measurements. 
 Here, we consider a semi-random Massart noise model as described below.

\subsection{On corruption models} Let us recall several main models of adversarial corruption here. By \emph{adversarial corruption model} we imply that the corruptions do not stem from a particular distribution but can be added in arbitrary way. Thus we do not assume an existence of an actual adversary (although it might be a convenient to describe a model and in some of the applications) but rather that the proposed methods are expected to work in the worst case.

In the \emph{fully adversarial corruption model}, as in QuantileSGD and QuantileRK methods in \cite{haddock2022quantile}, before we run the methods, the adversary can select any measurements and replace them with any values (so it can depend on the true parameter and measurement vectors) as long as the corruption fraction is $p$. This effectively makes the setting non-streaming for the adversary. For this noise model, the question about how much of the corruption fraction $p$ can be tolerated by SGD-exp or any reasonable streaming learning method appears to be still open. Even stronger adversarial models such as those considered in \cite{diakonikolas2022streaming}, allow for the adversarial to modify the measurement vectors as well. 

In the \emph{Massart noise model} (this work, also \cite{diakonikolas2021relu}), the adversary cannot choose which measurements to corrupt. Instead, the measurement is randomly selected for corruption with probability $p$. Then, the adversary replaces it with any values (so again, it can depend on the true parameter and the associated parameter vectors). For example, in the streaming setting, with probability $p$, an adversary can inspect a data point at each time and replace the associated label with any incorrect label to confuse the receiver as much as possible. This includes the sign-flip corruption. 

Lastly, in the $\emph{oblivious response corruption}$ model (e.g., \cite{diakonikolas2021relu, pesme2020online}), there is essentially no adversary. Each measurement is randomly selected with a probability of $p$ and then corrupted by additive random noise that is $\emph{independent}$ with the associated measurement vector and the true parameter. This is the weakest corruption model used in works such as \cite{pesme2020online}. This excludes noise types such as sign-flip noise, which is covered in our outlier model, the Massart noise model.

\subsection{Contribution summary} Here, we propose the SGD-exp method to solve the $\ell_1$ minimization problem efficiently in the streaming setting. Since in our streaming setting each data point arrives one at a time, the most generic yet natural corruption model is the Massart noise as described above. As such, we do not require the observations to be obliviously corrupted nor the magnitude or moments of corruption to be bounded.
This is an informal version of our theoretical results, Theorems~\ref{thm:main_convergence} and Theorem~\ref{thm:main_convergence_relu}:
\begin{thm} Let $\xb \in \R^d$ be an unknown vector, let $y_j = f(\langle \xb, \ab_j\rangle) + \epsilon_j$ for $j = 1, 2, \ldots$ represent streaming measurements of $\xb$. Here, $\epsilon_j$ is the Massart noise with corruption probability $p < 0.5$, the measurement vectors $\ab_j$ satisfy Gaussian-like assumptions and $f$ can be an identity or a ReLU function. There exists a constant $\xi_0(p,T) >0$ such that for any $\xi \in (0, \xi_0)$, if we take $\lambda = 1 + \xi$ and run $T$ iterations of $l_1$-SGD 
\begin{equation}\label{sgd-exp}
\xb_{k+1} = \xb_{k} + G \lambda^{-k} \text{sign} ( y_k - f(\langle \xb, \ab_k\rangle)  ) f'(\langle \xb, \ab_k\rangle)  \ab_k,
\end{equation}
where $f'$ is a subgradient of $f$ 
for $T$ iterations, then, with high probability, we have
$$
\|\xb - \xb_T\|_2 \lesssim \log T \exp \left(-\frac{T}{d \log^2 T} \right).$$
\end{thm}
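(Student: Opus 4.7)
The plan is a two-stage drift analysis of the squared error $r_k^2 := \|\xb - \xb_k\|_2^2$, followed by an exponential rescaling. First I would derive a one-step conditional inequality on $r_k^2$; then I would lift it to a high-probability bound on the rescaled process $R_k := \lambda^k r_k$ via a Hajek-type drift argument, so that $r_T \leq (\sup_{k \leq T} R_k)\,\lambda^{-T}$ yields the claimed rate.

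For the one-step inequality, squaring the update \eqref{sgd-exp} and taking conditional expectation gives
\begin{equation*}
\mathbb{E}[r_{k+1}^2 \mid \xb_k] = r_k^2 - 2G\lambda^{-k}\, \mathbb{E}[s_k\, f'(\langle \xb_k,\ab_k\rangle)\,\langle \xb-\xb_k,\ab_k\rangle] + G^2\lambda^{-2k}\, \mathbb{E}[(f'(\langle \xb_k,\ab_k\rangle))^2 \|\ab_k\|_2^2],
\end{equation*}
where $s_k = \mathrm{sign}(y_k - f(\langle \xb_k, \ab_k\rangle))$. Under the Massart model, conditional on $\ab_k$ the adversary can flip the ``good'' sign with probability at most $p$, so
\begin{equation*}
\mathbb{E}[s_k\, \langle \xb-\xb_k,\ab_k\rangle \mid \ab_k,\xb_k] \geq (1-2p)\,|\langle \xb-\xb_k,\ab_k\rangle|.
\end{equation*}
The Gaussian-like assumption on $\ab_k$ then gives $\mathbb{E}|\langle \xb-\xb_k,\ab_k\rangle| \gtrsim r_k$ and $\mathbb{E}\|\ab_k\|_2^2 \lesssim d$, producing the clean recursion
\begin{equation*}
\mathbb{E}[r_{k+1}^2 \mid \xb_k] \leq r_k^2 - c(1-2p)\,G\lambda^{-k} r_k + G^2 d\, \lambda^{-2k}.
\end{equation*}
For the ReLU case, $f'(t) = \ii[t > 0]$ kills the gradient when $\langle \xb_k,\ab_k\rangle \leq 0$, and the drift is further degraded on the event $\{\mathrm{sign}(\langle \xb,\ab_k\rangle) \neq \mathrm{sign}(\langle \xb_k,\ab_k\rangle)\}$. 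I would split on these two indicators and use that the probability of sign disagreement is bounded by the angle between $\xb$ and $\xb_k$, which is $O(r_k / \|\xb\|_2)$ for Gaussian-like $\ab_k$; this angular term is a lower-order perturbation once $r_k$ becomes small.

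For the second stage, rescaling to $R_k := \lambda^k r_k$ transforms the recursion into
\begin{equation*}
\mathbb{E}[R_{k+1}^2 \mid R_k] \leq \lambda^2 R_k^2 - c(1-2p)\, G\lambda^2 R_k + G^2 d\, \lambda^2.
\end{equation*}
This is not a supermartingale, but the right-hand side drops below $R_k^2$ once $R_k$ exceeds a threshold $\theta \asymp Gd /((1-2p)(\lambda^2-1))$. With $\xi = \lambda - 1 \asymp 1/(d\log^2 T)$ one has $\lambda^2 - 1 \asymp \xi$, making $\theta$ of order $\log T$ for a suitable choice of $G$. I would then apply a Hajek-type drift theorem, or equivalently construct an exponential supermartingale $M_k = e^{\eta R_k}$ for small $\eta > 0$ (controlling the one-step moment-generating function using a high-probability bound $\|\ab_k\|_2^2 \lesssim d$), to conclude that $R_k \leq C \log T$ uniformly for $k \leq T$ with probability $1 - o(1)$. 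Undoing the scaling gives $r_T \leq C \log T \cdot \lambda^{-T} \lesssim \log T \cdot \exp(-T\xi) = \log T \cdot \exp(-T/(d\log^2 T))$, which is the claimed bound.

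The main obstacle will be the second stage: because rescaling by $\lambda^k$ magnifies variability, uniform-in-$k$ tail control on $R_k$ demands precise handling of the one-step MGF, so the subgaussian-tail component of the Gaussian-like assumption and a union bound over the $T$ iterations both enter critically --- the $\log T$ factor in the final rate is essentially the price of this union bound. A secondary obstacle is the ReLU case: when $r_k$ is comparable to $\|\xb\|_2$ the sign-disagreement event is frequent and the drift term is small, so one must argue that the iterates never stall in a configuration with $\xb_k$ nearly orthogonal to $\xb$; this will likely require either a mild initialization condition or an auxiliary geometric lemma folded into the Gaussian-like assumption.
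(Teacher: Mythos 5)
Your architecture for the linear case is essentially the paper's: they too rescale the residual by $\lambda^k$, show the Massart adversary costs exactly a factor $(1-2p)$ in the expected drift of the cross term, control the one-step moment generating function of the rescaled process on a band $[a,b)$, and invoke a Hajek-type hitting-time bound with a union bound over the $T$ steps (which is indeed where the $\log T$ factors enter) before undoing the scaling. One technical difference: the paper tracks the \emph{squared} rescaled residual $Y_k = \lambda^{2k}\|\xb-\xb_k\|_2^2/G^2$ rather than $R_k=\lambda^k\|\xb-\xb_k\|_2$. Since the measurement vectors are unit-norm, the increment $Y_{k+1}-Y_k$ is then \emph{deterministically} bounded on the band $a\le Y_k<b$ (the cross term is at most $2\lambda^2\|\ub_k\|_2\le 2\lambda^2\sqrt{b}$), so the MGF condition follows from an elementary Taylor expansion and no subgaussian tail control of $\|\ab_k\|_2$ is needed; the step you flag as the main obstacle is the routine part in this parametrization.

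The genuine gap is in your ReLU plan, and it is one you partly anticipate. You propose to bound the probability of the sign-disagreement event by the angle between $\xb$ and $\xb_k$, of order $r_k/\|\xb\|_2$; but the iteration starts at $\xb_0=0$, where $r_0=\|\xb\|_2$ and this bound is vacuous, and an initialization condition would contradict the statement being proved. The paper closes this differently, and more simply: on the only event where the gradient is nonzero, $\{\inner{\xb_k,\ab_k}\ge 0\}$, one has $\mathrm{sign}\left(\sigma(\inner{\xb,\ab_k})-\inner{\xb_k,\ab_k}\right)=\mathrm{sign}\left(\inner{\xb-\xb_k,\ab_k}\right)$ identically --- if $\inner{\xb,\ab_k}<0$ both quantities are nonpositive, and otherwise they coincide --- so there is no sign-disagreement loss to control at all. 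The only price of the ReLU nonlinearity is that the drift is halved: by the symmetry assumption on $\ab_k$, the indicator $\ii_{\{\inner{\xb_k,\ab_k}\ge 0\}}$ has conditional expectation $1/2$ and splits $\E\left[\,|\inner{\xb-\xb_k,\ab_k}|\,\right]$ evenly, which only changes constants in the drift lemma. No non-stalling argument or auxiliary geometric lemma is required.
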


The proof of main theorems relies on a type of analysis that is, to the best of our knowledge, new in the robust regression literature. It is based on drift analysis of the stochastic process after we have transformed the residual equation of SGD, and can be interesting on its own. 

As a result, the proposed method SGD-exp \eqref{sgd-exp} provides (nearly) linear convergence guarantees for any corruption probability less than $0.5$, which is the best possible for the Massart noise model. Our analysis also reveals that SGD-exp tolerates any corruption probability less than $1$ when the corruption is symmetric oblivious noise\footnote{Actually, the only assumption we need is $\mathbb{P}$(the random outlier is positive) = $\mathbb{P}$(the random outlier is negative).} for the streaming setting, answering a related question in \cite{steinerberger2023quantile}. 

To the best of our knowledge, our approach provides the first nearly linear convergence guarantees for both linear regression and ReLU regression under the adversarial 
Massart noise model (see discussion above and Assumption 1 below for the formal definition) in the fully streaming setting.

\subsection{Related works}
For the linear regression in the streaming environment with oblivious random corruption, Pesme and Flammarion \cite{pesme2020online} have proposed a method (which we will refer to as SGD-root due to the square-root decaying step size scheduling), which is based on the SGD for the $\ell_1$ loss, as well as SGD-exp. Compared to it, SGD-exp comes with a faster convergence rate under a less restricted corruption model. Specifically, running SGD-root for $k$-iterations to recover $d$-dimensional signals only provides $O \left( \sqrt{d/k} \right)$ recovery accuracy, whereas our method guarantees the recovery error of the order of $O\left(\exp \left(-k \over d \log^2 k \right) \right)$. 
In \cite{pesme2020online}, the outliers are generated by the random  $\emph{oblivious response corruption }$ model, which excludes noise types such as sign-flip noise, which are covered in the Massart noise model that we consider. SGD-root also requires the corruption noise to have a finite first absolute moment, whereas we do not require any such condition. 

Diakonikolas et al. study robust linear and ReLU regression under the Massart noise model in \cite{diakonikolas2021relu}, but their approach is not designed for the streaming setting and does not provide an explicit convergence rate in terms of the number of iterations.

In \cite{diakonikolas2023distribution}, Diakonikolas et al. study robust regression for generalized linear models. Although this work considers more general distributions and activation functions, it is not for the streaming setting, and the corruption model is the oblivious response corruption, weaker than the Massart noise.

In \cite{diakonikolas2022streaming}, the authors consider an even more general contamination model where 
the measurement vectors can also be modified by the adversary. Due to this general model assumption, the error does not decay below $O(p)$, where $p$ is the corruption probability, even if we increase the number of iterations. 
Moreover, without any distributional assumption on the measurement vectors, the recovery of the true parameter is information-theoretically impossible in general \cite{diakonikolas2021relu,manurangsi2018computational}. The works \cite{diakonikolas2022learning, wu2023finite} aim to find a parameter to minimize the risk function, assuming that the covariates and contaminated responses are jointly random along with an additional condition on the size of contaminated responses. Neither of these conditions is required for our convergence guarantees.

\subsection{SGD with exponential decay step size}

The exponential step decay scheduling or its variants for SGD are commonly used as a default setting in many popular machine learning software packages, including TensorFlow \cite{abadi2016tensorflow} and PyTorch \cite{paszke2019pytorch}. The last iterates of SGD with these types of step size scheduling have demonstrated excellent empirical performance, as observed in \cite{ge2019step, li2021second, wang2023convergence}. However, the first convergence results for exponential step decay scheduling have emerged only recently \cite{li2021second, wang2021convergence}. These studies primarily focus on the problem from an optimization perspective and do not address robustness, thus failing to provide meaningful recovery in cases where measurements or stochastic gradients are contaminated by outliers like our setting.
Our method, SGD-exp, employs this more practical step size scheduling to address robust linear and ReLU regression problems in a streaming setting. 

\subsection{Organization} In the next section, we formalize the streaming measurement model and the assumptions on the noise and on the measurements. Then, we define the method both in linear \eqref{eq:main_iteration} and in ReLU cases \eqref{eq:main_iteration-relu} and state our main convergence theorems, Theorem~\ref{thm:main_convergence} and Theorem~\ref{thm:main_convergence_relu}. In Section~\ref{sec:drift}, we provide necessary background on discrete drift analysis: we state and prove Theorem~\ref{thm:MGF_bound} -- a convenient modification of [\cite{hajek1982hitting}, Theorem 2.3] -- which might be applicable more broadly. In Sections~\ref{sec:proofs} and~\ref{sec:proofs-relu}, we prove our main theorems, and provide empirical evidence in Section~\ref{sec:experiments} to support our theory. We summarize our results and discuss future research directions in Section \ref{sec:conclusion}.

\section{Main results}\label{sec:results}
We formalize the models and state our main results in this section.
 \subsection{Model 1: Streaming linear system}
Suppose we want to recover an unknown vector ${\bf x} \in \mathbb{R}^d$ from $n$ random linear measurements with corruption probability $p$.
More precisely, we have observations $y_1, y_2, \dots, y_n$ arriving in a streaming fashion: 
\begin{equation}\label{eq:measurements}
y_j = \inner{{\bf x}, \ab_j} + \epsilon _j, \quad \text{for} \quad j = 1, \ldots, n,
\end{equation}
such that the noise $\epsilon _j$ satisfies Assumption~\ref{m-noise} and measurement vectors $ {\bf a}_j  \in \R^d$ satisfy Assumption~\ref{assumption:measurement_model}: 
\begin{assumption}[Massart noise model]\mbox{}\\
\label{m-noise} The coordinates of an $n$-dimensional noise vector $\boldsymbol{\epsilon }$ satisfy $\epsilon _j = \xi_j \nu_j$, where  $\xi_j$ are the indicator random variables taking value $1$ with probability $p < 1/2$ and independent with all other variables, and $\nu_j$ are any variables, possibly random and dependent on measurement vectors or true signal. \end{assumption}
\begin{assumption}[Measurement model]\mbox{}\\
\label{assumption:measurement_model} For any $j = 0,1,\ldots$ let $\mathcal{F}_j$ be the $\sigma$-algebra generated by $\{{\ab}_0, \epsilon _0\}, \{{\ab}_1, \epsilon _1\}, \dots, \{{\ab}_{j-1}, \epsilon _{j-1}\}$. 
Let ${\bf a}_j  \in \mathbb{R}^d$ be the unit-norm measurement vectors independent with $\mathcal{F}_j$ such that $\sqrt{d} {\bf a}_j $ are i.i.d. mean-zero isotropic random vectors. For any $\ub \in \R^d$ measurable with respect to $\mathcal{F}_j$ it holds that
\[
\mathbb{E}_{{\bf a}_j } \bigg[  | \inner{\ub, {\bf a}_j }|  \Big|  \ub \bigg] \ge \widetilde{C} {\|\ub\|_2 \over \sqrt{d}}
\] for some constant $\widetilde{C} > 0$ that only depends on the distribution of ${\bf a}_j$.
\end{assumption}

Direct computation shows that the normalized random Gaussian vector, or equivalently, the random vectors drawn uniformly at random from $\mathbb{S}^{d-1}$, satisfy Assumption \ref{assumption:measurement_model} with the constant $\widetilde{C} = \sqrt{2/\pi}$. More generally, many other measurement models such as the normalized Bernoulli random vector satisfy the above assumption for moderately large dimension $d$ by the following lemma. 

\begin{lem}
Let $\{\phib_j\}_{j=1}^n$ be independent random vectors whose entries are i.i.d.,  mean-zero, unit-variance, and sub-Gaussian with sub-Gaussian norm bounded by $K$. Then the normalized vectors $\{\phib_j /\|{\phib_j}\|_2\}_{j=1}^n$ satisfy Assumption \ref{assumption:measurement_model} with a constant $\widetilde{C}$.  Here, $\widetilde{C}$ is a positive constant  that depends on $K$.
\end{lem}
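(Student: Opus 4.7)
The plan is to reduce the claim to a standard concentration plus anti-concentration argument on a one-dimensional marginal. By homogeneity in $\ub$ it suffices to prove the inequality for $\ub$ a unit vector. Since $\phib_j$ is independent of $\mathcal{F}_j$, and $\ub$ is $\mathcal{F}_j$-measurable, conditioning on $\ub$ leaves the law of $\phib_j$ unchanged, so I may treat $\ub$ as a deterministic unit vector. Writing $Z := \langle \ub, \phib_j\rangle$ and $R := \|\phib_j\|_2$, the goal becomes $\mathbb{E}[|Z|/R] \geq \widetilde{C}/\sqrt{d}$ for some $\widetilde{C}=\widetilde{C}(K)>0$. I would also verify the remaining parts of Assumption~\ref{assumption:measurement_model} (mean zero and isotropy of $\sqrt{d}\phib_j/R$) separately; they follow from the exchangeability of the coordinates of $\phib_j$ together with the standard symmetrization implicit in the sub-Gaussian assumption, and are not the substantive part of the lemma.

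For the main inequality I would establish two dimension-free ingredients. First, an upper bound on $R$ with high probability: $\|\phib_j\|_2^2$ is a sum of $d$ independent sub-exponentials each with mean $1$ and sub-exponential norm $\lesssim K^2$, so Bernstein's inequality gives $\mathbb{P}(R > \sqrt{2d}) \leq 2\exp(-c_K d)$ for some constant $c_K = c_K(K) > 0$. Second, a lower bound on $\mathbb{E}|Z|$: since $Z = \sum_i u_i \phi_{j,i}$ is sub-Gaussian with $\mathbb{E}Z^2 = \|\ub\|_2^2 = 1$ and $\mathbb{E}Z^4 \leq C(K)$, the Paley--Zygmund inequality applied to $Z^2$ yields $\mathbb{P}(|Z| \geq 1/\sqrt{2}) \geq 1/(4C(K))$, hence $\mathbb{E}|Z| \geq c_1(K) > 0$.

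I would combine these through a truncation:
\begin{equation*}
\mathbb{E}[|Z|/R] \;\geq\; \frac{1}{\sqrt{2d}}\,\mathbb{E}\bigl[|Z|\,\mathbf{1}_{R \leq \sqrt{2d}}\bigr] \;=\; \frac{1}{\sqrt{2d}}\Bigl(\mathbb{E}|Z| - \mathbb{E}\bigl[|Z|\,\mathbf{1}_{R > \sqrt{2d}}\bigr]\Bigr).
\end{equation*}
The tail term is controlled by Cauchy--Schwarz: $\mathbb{E}[|Z|\mathbf{1}_{R > \sqrt{2d}}] \leq (\mathbb{E}Z^2)^{1/2}\mathbb{P}(R > \sqrt{2d})^{1/2} \leq \sqrt{2}\,e^{-c_K d/2}$, which is at most $c_1(K)/2$ once $d \geq d_0(K)$. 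This produces the desired inequality with $\widetilde{C} = c_1(K)/(2\sqrt{2})$.

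The main obstacle, and the source of the ``moderately large dimension'' hypothesis, is exactly this last truncation step: we need $d$ sufficiently large (depending on $K$) so that the exponentially small probability that $R$ deviates from the scale $\sqrt{d}$ dominates the loss in the Cauchy--Schwarz bound, relative to the constant $c_1(K)$ coming from anti-concentration. A genuinely dimension-free version would require a more delicate argument, e.g. a small-ball estimate for the density of the one-dimensional marginal $\langle \ub, \phib_j/R\rangle$, but the truncation approach is sufficient for the form of Assumption~\ref{assumption:measurement_model} we need.
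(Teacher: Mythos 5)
Your proposal is correct and follows essentially the same route as the paper: a Khintchine-type lower bound on $\mathbb{E}|\langle\ub,\phib_j\rangle|$, Bernstein's inequality to control $\|\phib_j\|_2$ at scale $\sqrt{d}$, and a truncation plus Cauchy--Schwarz to absorb the exponentially small tail for $d$ large depending on $K$. The only cosmetic difference is that you derive the anti-concentration lower bound from Paley--Zygmund applied to $Z^2$ rather than citing Khintchine's inequality directly, and your Cauchy--Schwarz on the tail is applied to $|Z|$ rather than to $\|\phib_j\|_2$, which saves an immaterial factor of $\sqrt{d}$.
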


\begin{proof}

Let $\mathcal{F}_j$ be the $\sigma$-algebra generated by $\{{\phib}_0, \epsilon _0\}, \{{\phib}_1, \epsilon _1\}, \dots, \{{\phib}_{j-1}, \epsilon _{j-1}\}$ and $\ub \in \R^d$ be measurable with respect to $\mathcal{F}_j$ as in Assumption \ref{assumption:measurement_model}.
By the Khintchine inequality (see, e.g.,  Exercise 2.6.6 in \cite{vershynin2018high}), 
\[
 \mathbb{E}_{{\phib_j} } \bigg[  | \inner{\ub, {\phib_j}}|  \Big|  \ub \bigg] \ge C(K) {\|\ub\|_2  }.
\]
Also, by Bernstein's inequality, for a given constant $c > 1$, $$\mathbb{P}(\|\phib_j\|_2 > \sqrt{cd}) = \mathbb{P}(\|\phib_j\|_2^2 > cd) \le e^{-c'(K)d}$$ for some constant $ c'(K)$ that only depends on the sub-Gaussian norm of the entries of random vector $\phi_j$. Hence, 
\begin{align*}
   \mathbb{E}_{\phib_j} \bigg[  | \inner{\ub, \frac{\phib_j}{\|\phib_j\|_2}}|  \Big| \ub \bigg] 
    &\ge    \mathbb{E}_{{\phib_j} } \bigg[  | \inner{\ub, \frac{\phib_j}{\|{\phib_j}\|_2}}| \mathbbm{1}_{ \|\phib_j\|_2 \le \sqrt{cd}}   \Big|  u\bigg] \\
   &\ge {1 \over \sqrt{cd}} \mathbb{E}_{{\phib_j} } \bigg[  | \inner{\ub, {\phib_j}} |\mathbbm{1}_{ \|\phib_j\|_2 \le \sqrt{cd}}   \Big|  \ub\bigg]
   \\
   &= {1 \over \sqrt{cd}} \left( \mathbb{E}_{{\phib_j} } \bigg[  | \inner{\ub, {\phib_j}} | \Big|  \ub\bigg]  - \mathbb{E}_{{\phib_j} } \bigg[  | \inner{\ub, {\phib_j}} |\mathbbm{1}_{ \|\phi_j\|_2 > \sqrt{cd}}   \Big|  \ub\bigg] \right) \\
   &\ge {1 \over \sqrt{cd}} \left( \mathbb{E}_{{\phib_j} } \bigg[  | \inner{\ub, {\phib_j}} | \Big|  \ub\bigg]  - \|\ub\|_2  \mathbb{E}_{{\phib_j} } \bigg[  \|{\phib_j}\|_2 \mathbbm{1}_{ \|\phi_j\|_2 > \sqrt{cd}}   \Big|  \ub\bigg] \right) \\
   &\ge {1 \over \sqrt{cd}} \left( C(K) \|\ub\|_2  - \|\ub\|_2 \left( \mathbb{E}_{{\phib_j} } \bigg[  \|{\phib_j}\|_2^2 \bigg] \right)^{1/2} \left(\mathbb{E}_{{\phib_j} } \bigg[ \mathbbm{1}_{ \|\phib_j\|_2 > \sqrt{cd}}   \Big|  \ub\bigg]\right)^{1/2} \right) \\
   &= {1 \over \sqrt{cd}} \left( C(K) \|\ub\|_2  - \|\ub\|_2 \left( \mathbb{E}_{{\phib_j} } \bigg[  \|{\phib_j}\|_2^2 \bigg] \right)^{1/2} \left(\mathbb{E}_{{\phib_j} } \bigg[ \mathbbm{1}_{ \|\phib_j\|_2 > \sqrt{cd}}  \bigg]\right)^{1/2} \right) \\
    &= {1 \over \sqrt{cd}} \left( C(K) \|\ub\|_2  - \sqrt{d} \|\ub\|_2  \mathbb{P}(\|\phib_j\|_2 > \sqrt{cd})^{1/2} \right)\\
    &\ge {\|\ub\|_2 \over \sqrt{cd}} \left( C(K)   -  \sqrt{d} e^{-c'(K)d/2} \right),
\end{align*}
where we have used that  $\ub$ is measurable with respect to $\mathcal{F}_j$ and $\phi_j$ is independent with $\mathcal{F}_j$.
Setting $c = 3/2$ and taking $d$ large enough such that $\sqrt{d} e^{-c'(K)d/2} < C(K)/2$ proves the lemma.
\end{proof}

\begin{rem}[General covariance case]
\end{rem} \vspace{-2mm}
\textit{
\begin{enumerate}
    \item \emph{(Known Covariance Scenario)} The analyses for some works, including SGD-root \cite{pesme2020online} also assume a mean-zero Gaussian feature (measurement), which is more restricted than our measurement model in Assumption \ref{assumption:measurement_model}. Although they cover a non-identity covariance matrix $\Sigma$ for Gaussian vectors, in the case when the covariance matrix is known, we would like to point out that by multiplying by $L^{-1}$ (where $\Sigma = LL^T$ is the Cholesky Decomposition), the vectors can be standardized. Hence, we can  extend our convergence analysis to the non-identity covariance (non-isotropic) Gaussian case. For simplicity, we have carried out our analysis for the isotropic case but under a broader class of random measurement models. 
    \item{\emph{(Unknown Covariance Scenario)}} In some practical settings, $\Sigma$ may be unknown and high-dimensional, making accurate estimation challenging and expensive. While offline robust covariance estimation methods (e.g., \cite{diakonikolas2019robust} or \cite{diakonikolas2022streaming}) can in principle provide an estimate of $\Sigma$, they are not designed to simultaneously handle streaming parameter estimation and may demand considerable amounts of samples or computational resources. Although recent work \cite{diakonikolas2022streaming} has made progress on streaming covariance estimation, it remains nontrivial to integrate such methods into a streaming algorithm for regression based on the estimation of the covariance at each iteration.
\end{enumerate}}

 \subsection{Model 2: Streaming ReLU regression}

Suppose we observe a signal $\xb$ through nonlinear measurements 
$y_i$ given as 

\begin{equation}\label{eq:measurements-relu}
    y_i = \sigma( \inner{\xb, \ab_i}) + \epsilon _i, \quad \text{ for } i = 1, \ldots, n,
\end{equation}
where the ReLU activation function $\sigma(u) := \max \{0, u\}$, the noise $\epsilon _j$ satisfies Assumption~\ref{m-noise} and measurement vectors $ {\bf a}_j  \in \R^d$ satisfy Assumption \ref{assumption:measurement_model} with the following additional condition on the distribution of ${\bf a}$: 

\begin{assumption}
\label{assumption:measurement_model_ReLU}
For any $\ub \in \R^d$ and $\vb \in \R^d$, for any $ i = 1, \ldots$ we have
$$
\E_{\ab_i} \left[\mathbbm{1}_{ \{\inner{\vb, \ab_i} \ge 0\}} \right] = \E_{\ab_i} \left[\mathbbm{1}_{ \{\inner{\vb, \ab_i} \le 0\}} \right]
$$
and
$$
\E_{\ab_i} \left[|\inner{\ub, \ab_i}| \mathbbm{1}_{ \{\inner{\vb, \ab_i} \ge 0\}} \right] = \E_{\ab_i} \left[|\inner{\ub, \ab_i}| \mathbbm{1}_{ \{\inner{\vb, \ab_i} \le 0\}} \right].
$$
\end{assumption}
\begin{rem}
    Note that Assumption \ref{assumption:measurement_model_ReLU} holds for any symmetric distribution for $\ab_i$; the distribution of $\ab_i$ is identical to the distribution of $-\ab_i$. This includes common measurement models such as the Gaussian distribution or the symmetric Bernoulli distribution.
\end{rem}
\begin{rem}
    At least some form of assumption on the distribution of $\ab$ is necessary for ReLU regression, otherwise recovering the unknown vector $\xb$ is information-theoretically impossible \cite{diakonikolas2021relu, manurangsi2018computational}. 
    The Assumption \ref{assumption:measurement_model_ReLU} is similar to the one in \cite{wu2023finite}, but their work aims to minimize the risk with respect to the $\ell_2$ loss and is not about the true parameter recovery using the more robust $\ell_1$ loss as ours. Moreover, unlike our work, their corruption model and result do not allow arbitrary large corruptions. 
\end{rem}

\subsection{SGD-exp method and main theorems}
Consider the following version of stochastic gradient descent for the $\ell_1$-loss, or least absolute deviation error. The same version was considered in QuantileSGD \cite{haddock2022quantile} in the non-streaming setting but with a different step size scheduling. Namely, in the linear case, we define SGD-exp method iteration as
\begin{align}
\label{eq:main_iteration}
{\bf x}_{k+1} = {\bf x}_{k} + G \lambda^{-k} \text{sign} (y_k - \inner{{\bf x}_k, {\bf a}_k} ){\bf a}_k,
\end{align} where $\lambda^{-k}$ is the step size in $k$-th iteration and $\lambda > 1$. The initial iterate ${\bf x}_0$ is set to $0$. Figure~\ref{fig:SGD_corrupted_system} (left) shows that the method empirically converges linearly in the number of iterations for $\lambda = 1.00003$.

Our main result is that 

\begin{comment}
\begin{thm}
\label{thm:main_convergence2}
Suppose that ${\bf x} \in \mathbb{R}^d$ is observed through the noisy streaming scheme \eqref{eq:measurements} 
and the measurement vectors $ {\bf a}_j  \in \R^d$ satisfy Assumption \ref{assumption:measurement_model}. Suppose that the dimension $d$ is sufficiently large enough to satisfy ${ \widetilde{C}(1-2p) \over \sqrt{d}} <  {3 \over 7}$,  where the positive constant $\widetilde{C}$ is from Assumption \ref{assumption:measurement_model} on the measurement which only depends on the  the sub-Gaussian norm $K$. Let $\xb_T$ be the result of $T$ steps of SGD as per \eqref{eq:main_iteration} with 
$$\lambda \le \sqrt{1 + \widetilde{C}^2  { (1-2p)^2 \over {R d \log^2 T}}},\ \text{ and } \; G \ge \|\xb\| \sqrt{2(\lambda^2-1)}, \text{ with any } R > 225.
$$ 
Then, with probability 

$$1 - 70\frac{dT^{1 - \sqrt{ R }/15}}{\widetilde{C}^2(1-2p)^2} $$
 the error is bounded by
	$$
	\|\xb - \xb_T\|_2 
\le G {2\widetilde{C}  \sqrt{Rd} \log T \over 1-2p}   \exp{ \Big \{- T \cdot {\widetilde{C}^2 (1-2p)^2   \over 3{R}  d \log^2 T} \Big \} }.
$$
\end{thm}
\end{comment}

\begin{thm}
\label{thm:main_convergence}
Suppose that ${\bf x} \in \mathbb{R}^d$ is observed through the noisy streaming scheme \eqref{eq:measurements} 
and the measurement vectors $ {\bf a}_j  \in \R^d$ satisfy Assumption \ref{assumption:measurement_model}. Suppose that the dimension $d$ is sufficiently large enough to satisfy ${ \widetilde{C}(1-2p) \over \sqrt{d}} <  {3 \over 7}$.
Let $R$ be any constant with $R >225$ and let the parameter $G > 0 $ be such that for 
$$\lambda := \sqrt{1 + \widetilde{C}^2  { (1-2p)^2 \over {R d \log^2 T}}},\ \text{ we have } \; {\|\xb\|_2^2 \over G^2} <  {1 \over {2(\lambda^2-1)}}.  
$$ 
Let $\xb_T$ be the result of $T$ steps of SGD as per \eqref{eq:main_iteration}. Then, with probability 

$1 - 70dT^{1 - \sqrt{ R }/15}/\widetilde{C}^2(1-2p)^2 $
 the error is bounded by
	$$
	\|\xb - \xb_T\|_2 
\le G {2\widetilde{C}  \sqrt{Rd} \log T \over 1-2p}   \exp{ \Big \{- T \cdot {\widetilde{C}^2 (1-2p)^2   \over 3{R}  d \log^2 T} \Big \} },
$$

 where the positive constant $\widetilde{C}$ is from Assumption \ref{assumption:measurement_model} on the measurement and it only depends on the  the sub-Gaussian norm $K$.
\end{thm}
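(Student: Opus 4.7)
My plan is to transform the SGD iteration~\eqref{eq:main_iteration} into a drift inequality for a suitably rescaled potential and then invoke Theorem~\ref{thm:MGF_bound}. Write $\mathbf{r}_k:=\xb-\xb_k$ and $s_k:=\text{sign}(\langle\mathbf{r}_k,\ab_k\rangle+\epsilon_k)$, so that $\mathbf{r}_{k+1}=\mathbf{r}_k - G\lambda^{-k}s_k\ab_k$. Using $\|\ab_k\|_2=1$,
$$\|\mathbf{r}_{k+1}\|_2^2=\|\mathbf{r}_k\|_2^2-2G\lambda^{-k}\,s_k\langle\mathbf{r}_k,\ab_k\rangle+G^2\lambda^{-2k}.$$
The cross term is where the Massart model (Assumption~\ref{m-noise}) enters: conditioning on $(\mathcal{F}_k,\ab_k)$, the uncorrupted event $\{\xi_k=0\}$ of probability $1-p$ forces $s_k\langle\mathbf{r}_k,\ab_k\rangle=|\langle\mathbf{r}_k,\ab_k\rangle|$, while on the corrupted event the product is at worst $-|\langle\mathbf{r}_k,\ab_k\rangle|$. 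Integrating out the noise and then invoking Assumption~\ref{assumption:measurement_model} for the randomness in $\ab_k$ yields
$$\E[\,s_k\langle\mathbf{r}_k,\ab_k\rangle\mid\mathcal{F}_k\,]\geq (1-2p)\widetilde{C}\,\|\mathbf{r}_k\|_2/\sqrt{d}.$$

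The next move is to absorb the $\lambda^{-k}$ step size into the potential. I would pass to $V_k:=\lambda^{2k}\|\mathbf{r}_k\|_2^2$; multiplying the one-step identity by $\lambda^{2(k+1)}$ and taking conditional expectations gives
$$\E[\,V_{k+1}\mid\mathcal{F}_k\,]\leq \lambda^2 V_k - \frac{2G\lambda^2(1-2p)\widetilde{C}}{\sqrt{d}}\sqrt{V_k}+G^2\lambda^2.$$
Thus the conditional drift $\E[V_{k+1}-V_k\mid\mathcal{F}_k]$ is a parabola in $\sqrt{V_k}$ opening upward with leading coefficient $\lambda^2-1$. With the prescribed $\lambda$, this drift is strictly negative on a window $(\sqrt{V_-},\sqrt{V_+})$ whose endpoints scale as $\sqrt{V_-}\asymp G\sqrt{d}/((1-2p)\widetilde{C})$ (the \emph{noise floor} set by the additive $G^2\lambda^2$ term) and $\sqrt{V_+}\asymp GR\sqrt{d}\log^2 T/((1-2p)\widetilde{C})$. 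The hypothesis $\|\xb\|_2^2/G^2<1/(2(\lambda^2-1))$ ensures that the initial value $V_0=\|\xb\|_2^2$ sits safely inside this window, so the trajectory is driven toward the noise floor and does not, with appreciable probability, escape above $\sqrt{V_+}$ where the drift flips sign and the argument would break down.

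Finally, I would apply Theorem~\ref{thm:MGF_bound} to $V_k$ to upgrade the expected-decrease inequality into the high-probability statement $V_T \lesssim G^2 Rd\log^2 T/((1-2p)^2\widetilde{C}^2)$ with failure probability on the order $dT^{1-\sqrt{R}/15}/(\widetilde{C}^2(1-2p)^2)$. The deterministic one-step bound $|V_{k+1}-V_k|\leq (\lambda^2-1)V_k+2G\lambda^2\sqrt{V_k}+G^2\lambda^2$, together with the sub-Gaussian-type tail of $\langle\mathbf{r}_k,\ab_k\rangle$ and the boundedness of $s_k\in\{-1,+1\}$, should supply the moment generating function estimate required by the theorem, with an exponential parameter of order $1/(\lambda^2-1)\asymp Rd\log^2 T/(\widetilde{C}^2(1-2p)^2)$. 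Once $V_T$ is controlled, converting back via $\|\mathbf{r}_T\|_2=\sqrt{V_T}/\lambda^T$ and using $\log\lambda\approx (\lambda^2-1)/2$ translates $\lambda^{-T}$ into an exponential in $-T\widetilde{C}^2(1-2p)^2/(Rd\log^2 T)$; the constant $1/3$ appearing in the claim is the natural slack introduced by the MGF estimate. The main obstacle, in my view, is precisely this MGF verification: one must simultaneously exploit the contracting contribution of the sign term, handle the increment fluctuations that grow like $\sqrt{V_k}$, and retain enough slack in the exponent to keep the polynomial-in-$T$ failure probability factor under control.
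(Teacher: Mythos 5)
Your overall architecture matches the paper's proof: the rescaled potential $V_k=\lambda^{2k}\|\xb-\xb_k\|_2^2$ (the paper works with $Y_k=V_k/G^2$), the one-step expansion with $\|\ab_k\|_2=1$, the Massart worst-case argument producing the $(1-2p)\widetilde{C}\|\cdot\|/\sqrt{d}$ cross term via Assumption~\ref{assumption:measurement_model}, the negative-drift window, the appeal to Theorem~\ref{thm:MGF_bound} (via its corollary on the hitting time of $b$), and the final conversion through $\lambda^{-T}$ and $\log\lambda\approx(\lambda^2-1)/2$. Two corrections are needed, one of them substantive. First, a framing point: the hypothesis $\|\xb\|_2^2/G^2<1/(2(\lambda^2-1))$ places $Y_0$ \emph{below} the window $[a,b)$ with $a=1/(2(\lambda^2-1))$, $b=3/(2(\lambda^2-1))$, as condition (C0) requires; the region below $a$ is not where the drift is negative (the $+G^2\lambda^2$ term pushes upward there) and is instead handled by the separate bound (C2). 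Second, and more importantly, the MGF parameter you propose, $\eta\asymp 1/(\lambda^2-1)$, cannot work. On the event $\{a\le Y_k<b\}$ the increment satisfies $|Y_{k+1}-Y_k|\le(\lambda^2-1)\|\ub_k\|^2+2\lambda^2|\langle\ub_k,\ab_k\rangle|+\lambda^2\lesssim(\lambda^2-1)^{-1/2}$ since $\|\ub_k\|\le\sqrt{b}$, so your choice gives $\eta\,|Y_{k+1}-Y_k|\asymp(\lambda^2-1)^{-3/2}\gg1$ and the exponential moment in (C1) blows up rather than staying below $1$. The paper takes $\eta=c^*\sqrt{\lambda^2-1}$ with $c^*\asymp\widetilde{C}(1-2p)/\sqrt{d}$, tuned so that simultaneously $\eta\,|Y_{k+1}-Y_k|=O(1)$ (making the Taylor remainder of $e^{\eta\Delta_k}$ a summable geometric series), the negative linear term still dominates the quadratic remainder (yielding $\rho=1-\Theta(\widetilde{C}^2(1-2p)^2/d)$), and $\eta(b-a)\gtrsim\sqrt{R}\log T$, which is exactly what produces the $T^{1-\sqrt{R}/15}$ failure probability. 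Since you yourself identify the MGF verification as the crux and this is precisely where your parameter choice fails, the plan does not close without this repair.
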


\begin{rem} (On the possible choice of the initial step size $G$) 
We note that an exponentially decaying step size schedule puts a natural constraint on how far the iterates can move from ${\bf x}_0$. However, we can set the initial step size, $G$, based on the upper bound of the true solution $\|{\bf x}\|_2$, which is typically assumed to be known in the theoretical analysis of streaming algorithms \cite{dasnear}. For a given constant $R > 225$ that determines $\lambda$ (along with the corruption probability, dimension, and the number of iterations), one way to choose the parameter $G$ is to set $G$ such that $G \ge \|\xb\| \sqrt{2(\lambda^2-1)}$, as stated in Theorem \ref{thm:main_convergence}. Essentially, the initial step size $G$ should be set to be proportional to the upper bound of the true solution $\|{\bf x}\|_2$ if one wants to make the convergence rate independent of the upper bound of the true solution $\|{\bf x}\|_2$.
\end{rem}
\begin{rem}[On the choice of $\lambda$]  Theorem~\ref{thm:main_convergence} also says that the convergence is guaranteed as soon as we choose $\lambda = 1 + \xi$ with positive $\xi$ that is small enough. Choosing $\xi$ smaller than necessary leads to a slower convergence rate, as numerically observed in Figure~\ref{fig:SGD-exp_streaming_various_p_step_sizes}. 
\end{rem}

In the ReLU case, using the subdifferentials of the absolute function and the activation function $\sigma$, the corresponding SGD-exp iteration is given by
\begin{equation}\label{eq:main_iteration-relu}
\xb_{k+1} = \xb_{k} + G \lambda^{-k} \text{sign} ( y_k - \sigma( \inner{\xb_k, \ab_k}) ) \mathbbm{1}_{ \{\inner{\xb_k, \ab_k} \ge 0\} } \ab_k.
\end{equation} 
Figure~\ref{fig:SGD_corrupted_system} (right) shows that SGD-exp empirically converges linearly in the number of iterations for $\lambda = 1.00003$ for the streaming with corrupted ReLU measurements.
\begin{figure}[h]
\centering
   \includegraphics[width=0.45 \textwidth]{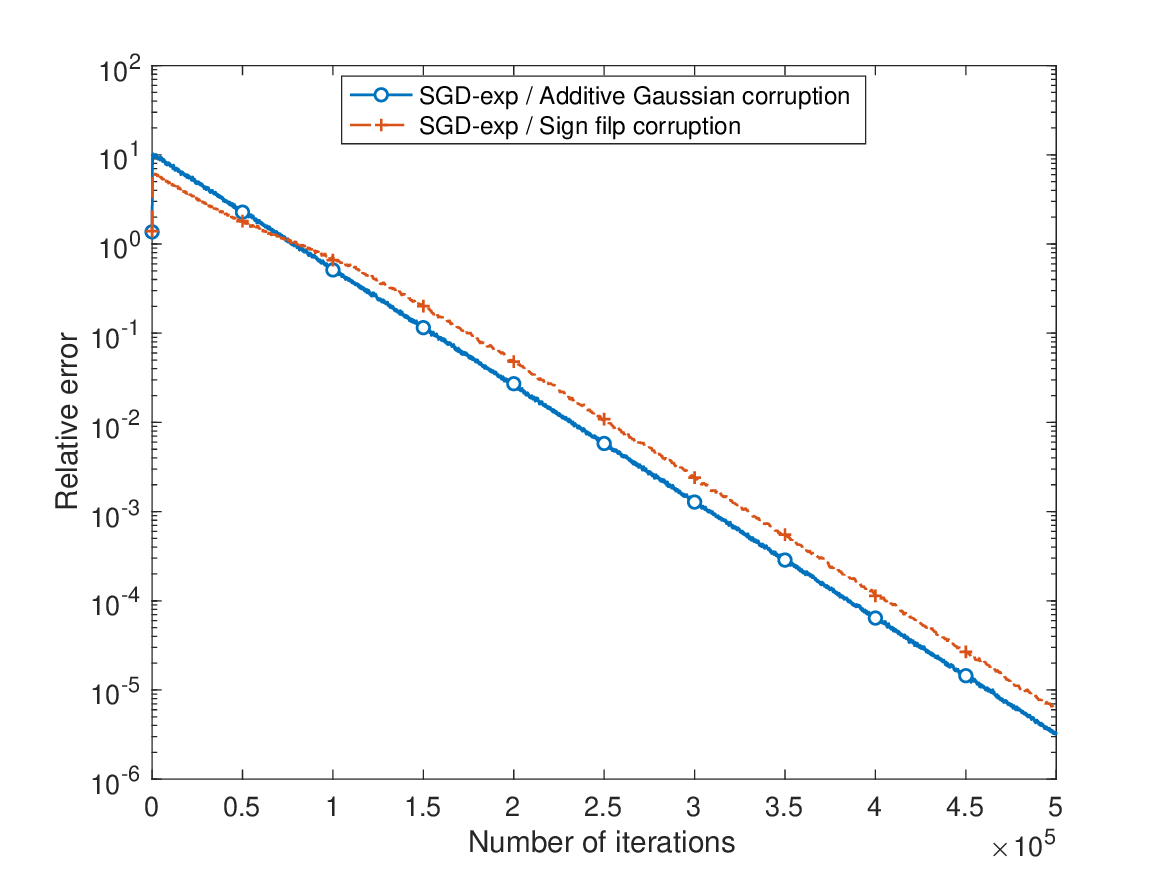}
   \includegraphics[width=0.45 \textwidth]{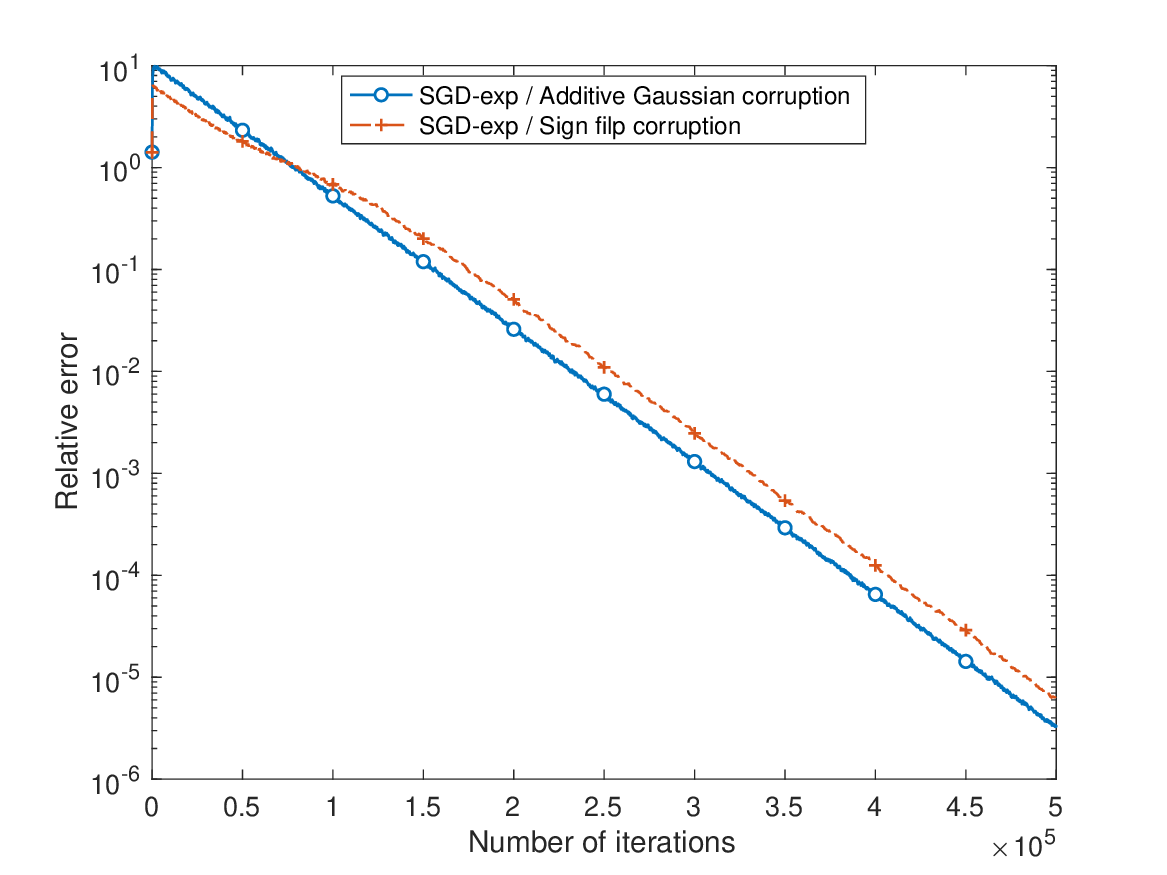}

		\caption[Relative error of the method] {Relative error of the SGD-exp for (a) a corrupted linear system (left) and (b) a corrupted rectified linear (ReLU) system (right) with corruption probability $p = 0.4$.  The blue curves represent relative error when the corruption is a large additive Gaussian noise. The red curves represent the sign-flip corruption, i.e., with probability $p$, the sign of measurement is flipped. The measurement vectors are $100$-dimensional i.i.d. normalized standard Gaussian vectors. The dimension of the signal is $100$ and both plots are averaged over $20$ trials. 
		}
		\label{fig:SGD_corrupted_system}
	\end{figure}
 
The main theorem in the ReLU case is as follows:
\begin{thm}
\label{thm:main_convergence_relu}
Suppose ${\bf x} \in \mathbb{R}^d$ is observed through the noisy ReLU streaming scheme \eqref{eq:measurements-relu} 
 and the measurement vectors $ {\bf a}_j  \in \R^d$ satisfy Assumptions~\ref{assumption:measurement_model} and \ref{assumption:measurement_model_ReLU}. Suppose that the dimension $d$ is sufficiently large enough to satisfy $ { \widetilde{C}(1-2p) \over \sqrt{d}} < 1$. 
  Let $R$ be any constant with $R > 400$ and let the parameter $G > 0 $ be such that for 
$$\lambda := \sqrt{1 + \widetilde{C}^2  { (1-2p)^2 \over {R d \log^2 T}}},\ \text{ we have } \; {\|\xb\|_2^2 \over G^2} < {1 \over {2(\lambda^2-1)}}.   
 $$ Let $\xb_T$ be the result of $n$ steps of SGD-exp as per \eqref{eq:main_iteration-relu}. Then, with probability $$1 -  {120 d  \over \widetilde{C}^2(1-2p)^2 } \cdot  T^{1 - \sqrt{ R }/20} $$

 the error is bounded by

	$$
	\|\xb - \xb_T\|_2 
\le G {2\widetilde{C}  \sqrt{Rd} \log T \over 1-2p}   \exp{ \Big \{- T \cdot {\widetilde{C}^2 (1-2p)^2   \over 3{R}  d \log^2 T} \Big \} },
$$
 where the positive constant $\widetilde{C}$ is from Assumption \ref{assumption:measurement_model} on the measurement and it only depends on the sub-Gaussian norm $K$.
\end{thm}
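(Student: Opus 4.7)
The plan is to mirror the drift-based proof of Theorem~\ref{thm:main_convergence}, applying Theorem~\ref{thm:MGF_bound} to a rescaled residual process, and the only new work is a case analysis that accommodates both the ReLU kink and the update-gate $\mathbbm{1}_{\{\inner{\xb_k,\ab_k}\ge 0\}}$ in iteration~\eqref{eq:main_iteration-relu}. Set $\vb_k:=\lambda^k(\xb-\xb_k)$ and $s_k := \text{sign}(y_k-\sigma(\inner{\xb_k,\ab_k}))\,\mathbbm{1}_{\{\inner{\xb_k,\ab_k}\ge 0\}}$. Using $\|\ab_k\|_2=1$, expanding the squared error and multiplying by $\lambda^{2(k+1)}$ gives the basic one-step relation
\[
\|\vb_{k+1}\|_2^2 \;\le\; \lambda^2\|\vb_k\|_2^2 \;-\; 2G\lambda^{2}\,s_k\inner{\vb_k,\ab_k} \;+\; G^2\lambda^2 .
\]

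The decisive step is the conditional drift lower bound for $\E[s_k\inner{\xb-\xb_k,\ab_k}\mid\mathcal{F}_k]$. First I would perform a case analysis on the signs of $\inner{\xb,\ab_k}$ and $\inner{\xb_k,\ab_k}$ to show that, in the uncorrupted event ($\xi_k=0$ from Assumption~\ref{m-noise}), one has $s_k\inner{\xb-\xb_k,\ab_k}=|\inner{\xb-\xb_k,\ab_k}|\,\mathbbm{1}_{\{\inner{\xb_k,\ab_k}\ge 0\}}$ pointwise; the only non-routine subcase is $\inner{\xb_k,\ab_k}\ge 0>\inner{\xb,\ab_k}$, where $\sigma(\inner{\xb,\ab_k})=0$ forces $s_k=-1$ while $\inner{\xb-\xb_k,\ab_k}<0$, so signs still agree. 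Averaging over $\xi_k$ with the worst-case Massart value gives the standard $(1-2p)$ factor. Then, applying Assumption~\ref{assumption:measurement_model_ReLU} with $\vb=\xb_k$ and $\ub=\xb-\xb_k$ (both $\mathcal{F}_k$-measurable) symmetrizes the indicator, and Assumption~\ref{assumption:measurement_model} supplies the anticoncentration bound, yielding
\[
\E\!\left[|\inner{\xb-\xb_k,\ab_k}|\,\mathbbm{1}_{\{\inner{\xb_k,\ab_k}\ge 0\}}\,\big|\,\mathcal{F}_k\right] \;=\; \tfrac{1}{2}\,\E[|\inner{\xb-\xb_k,\ab_k}|\mid\mathcal{F}_k] \;\ge\; \frac{\widetilde{C}\,\|\xb-\xb_k\|_2}{2\sqrt{d}} .
\]
Substituting back produces
\[
\E[\|\vb_{k+1}\|_2^2\mid\mathcal{F}_k] \;\le\; \lambda^2\|\vb_k\|_2^2 \;-\; \frac{\lambda^2 G\widetilde{C}(1-2p)}{\sqrt{d}}\,\|\vb_k\|_2 \;+\; G^2\lambda^2 ,
\]
which has exactly the same form as the recursion underlying Theorem~\ref{thm:main_convergence} but with a drift coefficient halved; this halving is precisely what turns the linear-case threshold $R>225$ into $R>400$.

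From this recursion onward the proof proceeds identically to the linear case. The zero-drift region for $\|\vb_k\|_2$ is of order $G\sqrt{Rd}\log T/(\widetilde{C}(1-2p))$, the per-step increments of $\|\vb_k\|_2$ are bounded by $(\lambda-1)\|\vb_k\|_2 + G\lambda$, and the constraint on $G$ in the theorem hypothesis guarantees the MGF conditions of Theorem~\ref{thm:MGF_bound} on the rescaled process. Invoking Theorem~\ref{thm:MGF_bound} yields, with probability at least $1-120dT^{1-\sqrt{R}/20}/(\widetilde{C}^2(1-2p)^2)$, the bound $\|\vb_T\|_2\le 2G\widetilde{C}\sqrt{Rd}\log T/(1-2p)$. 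Unrolling the rescaling, $\|\xb-\xb_T\|_2=\lambda^{-T}\|\vb_T\|_2$, and $\lambda^{-T}\le\exp(-T\widetilde{C}^2(1-2p)^2/(3Rd\log^2 T))$ for the prescribed $\lambda$, giving the advertised error bound.

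The main obstacle is the ReLU-specific drift bound: unlike the linear case where the sign of $s_k$ tracks the signed residual directly, here the gating indicator $\mathbbm{1}_{\{\inner{\xb_k,\ab_k}\ge 0\}}$ depends on $\xb_k$, which is in general correlated with the direction $\xb-\xb_k$ we are trying to lower-bound in expectation. Without extra structure one would pay an $\mathcal{F}_k$-dependent penalty; it is precisely Assumption~\ref{assumption:measurement_model_ReLU} that decouples this correlation and replaces the indicator by a clean factor of $1/2$. A secondary but necessary check is that the boundedness/MGF hypotheses of Theorem~\ref{thm:MGF_bound} hold uniformly along the sample path, which is handled self-consistently because the same drift that keeps $\|\vb_k\|_2=O(\sqrt{Rd}\log T)$ also bounds its one-step change by $O(G)$.
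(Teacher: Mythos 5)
Your proposal is correct and follows essentially the same route as the paper: the same rescaled process $\lambda^k(\xb-\xb_k)/G$, the same sign case-analysis on the uncorrupted event (this is exactly the paper's Lemma~\ref{lem:noiseless_sign_reduction}, including the subcase $\inner{\xb_k,\ab_k}\ge 0>\inner{\xb,\ab_k}$), the same use of Assumption~\ref{assumption:measurement_model_ReLU} to replace the gating indicator by a factor of $1/2$, and the same drift/hitting-time machinery of Theorem~\ref{thm:MGF_bound} and Corollary~\ref{cor:stopping_time_bound} with the halved drift coefficient accounting for $R>400$. The only deviations are harmless constant-level slack (e.g., bounding the additive term by $G^2\lambda^2$ rather than $G^2\lambda^2/2$, and writing the symmetrization as an equality where in general it is an inequality $\ge 1/2$).
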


\begin{rem}[Addressing highly corrupted regime]
Theorem \ref{thm:main_convergence} implies that the iterates of the SGD-exp converge (almost) linearly up to logarithmic factors for the corruption probability $p < 1/2$. This is theoretically the best possible: suppose that $p > 1/2$. Then, any approach would not be able to determine whether the responses are generated from the signal $x$ with the sign-flip corruption  with probability $p$ or they are generated from $-x$ with the sign-flip corruption with probability $p-1/2$. Prominently, in the case when the noise is symmetric and oblivious, our proof yields the result for all $p < 1$, see Remark~\ref{rem:symmetric oblivious corruption} and Remark~\ref{rem:symmetric oblivious corruption_ReLU} for the details.
\end{rem}

\begin{rem}[Convergence rate is almost like in an uncorrupted setting]
 
Our effective convergence rate from Theorem~\ref{thm:main_convergence} and Theorem~\ref{thm:main_convergence_relu} is $$\exp \left(-C{ T \over d} {(1-2p)^2 \over \log^2 T} + \log \left(\frac{\sqrt{d}\log{T}}{1 - 2p} \right)\right).$$ Notably, it is only logarithmically slower than compatible streaming algorithms on the uncorrupted setting. For example, in the streaming setting, the standard Kaczmarz convergence rate is \\
$\exp \left( -C{T \over d} \right)$ for $d$-dimensional Gaussian or Bernoulli measurement vectors. Then, in \cite{li2021second}, an SGD with similar exponential step size schedule $\lambda \sim T^{(1/T)}$ results in the loss decreases of order $O(\exp(-C(d, L)\frac{T}{\log T}))$ on the setting without contamination and under appropriate smoothness $L$-Lipschitz conditions.

\end{rem}

\begin{rem}[SGD-exp is still finite horizon]
While SGD-exp operates in an online manner, requiring access to one data point at a time without revisiting past data, it does depend on prior knowledge of the total number of iterations $T$ to set the exponential decay factor $\lambda$ for the step size scheduling. 
Although there are some works for streaming methods that require the horizon length $(T)$ in advance such as \cite{tsai2022heavy, dasnear} like ours, the dependence on $T$ for the step size schedule may limit its applicability in fully streaming settings. In practice, the horizon length $T$ may often be estimated based on domain knowledge, such as desired precision or a sample budget specific to the ML or statistics application. Exploring adaptive step size schedules that do not depend on $T$ for SGD-exp is an interesting direction for future work.
\end{rem}

The idea of convergence analysis  in both Theorem~\ref{thm:main_convergence} and Theorem~\ref{thm:main_convergence_relu} is to show that a sequence of random variables $\|\ub_T\| := \lambda^T \|\xb - \xb_{T}\|/G$ is (almost) uniformly bounded with high probability, which in turn shows the residual error $\|\xb - \xb_{T}\|$ decreases (almost) geometrically since $\lambda > 1$. To this end, we adapt for our case the result from Hajek \cite{hajek1982hitting} on the drift analysis of discrete stochastic processes, also used in several stochastic algorithm convergence analyses recently \cite{akimoto2022global, bansal2020line}.

\section{Background on drift analysis}\label{sec:drift}

Consider a sequence of random variables $Y_0, Y_1, \dots, Y_n$ measurable with respect to a filtration $\{\mathcal{F}_k\}_{k\ge 0}$ (that is, $Y_0, Y_1, \dots, Y_k$ are $\mathcal{F}_k$-measurable). We follow the general convention that $\mathcal{F}_0 = \emptyset$. Let $\tau_b$ be the first hitting time defined as
$$\tau_b := \min \{j: Y_j \ge b \}.$$
The goal is to obtain a high probability upper bound on the first hitting time defined above, given that the random process's increments are bounded in expectation. The next theorem is a variant of Theorem 2.3 in \cite{hajek1982hitting}, modified to suit our specific purpose.

\begin{thm}
\label{thm:MGF_bound}
Let $\{Y_k\}_{k \ge 0}$ be a discrete stochastic process adapted to the filtration $\{\mathcal{F}_k\}_{k\ge 0}$, where $\mathcal{F}_0$ is a trivial $\sigma$-algebra. Suppose that for some $a,b, \eta, \rho, D \in \R$ such that $a < b$, $\eta> 0$,  $0 < \rho <1$ and $D \ge 1$, the process satisfies 
\begin{itemize}
\item[(C0)] $Y_0 \in [0,a)$ a.s.,
\item[(C1)] $ \mathbb{E} [e^{\eta (Y_{k+1} - Y_{k})} \mathbbm{1}_{ \{a \le Y_k < b\} } | \mathcal{F}_k] \le \rho $ 
a.s. for any $k \ge 1$,
\item[(C2)] $ \mathbb{E} [e^{\eta (Y_{k+1} - a)} \mathbbm{1}_{ \{Y_k < a\} } | \mathcal{F}_k] \le D $
 
a.s. for any $k \ge 1$.
\end{itemize}
Then, for the first hitting time $\tau_b := \min \{j: Y_j \ge b \}$ it hold for any $k \ge 1$
\begin{equation}\label{eq:exp_bound}
\mathbb{E} \Big[ e^{\eta Y_{k}} \mathbbm{1}_{ \{\tau_b >k-1\} }  \Big]   \le  e^{\eta a} \left(\rho^k  + { {1 - \rho^k} \over {1 - \rho} } D \right).
\end{equation} 

\end{thm}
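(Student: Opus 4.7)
I would prove the estimate by induction on $k$, using the natural decomposition of $\{Y_k < b\}$ into the two regions $\{a \le Y_k < b\}$ and $\{Y_k < a\}$ that correspond exactly to the two hypotheses (C1) and (C2). Let $M_k := \mathbb{E}\bigl[e^{\eta Y_k} \mathbbm{1}_{\{\tau_b > k-1\}}\bigr]$, and notice that $\{\tau_b > k-1\}$ means $Y_j < b$ for all $j \le k-1$, which is $\mathcal{F}_{k-1}$- and hence $\mathcal{F}_k$-measurable. The claim to prove by induction is $M_k \le e^{\eta a}\bigl(\rho^k + \tfrac{1-\rho^k}{1-\rho}D\bigr)$.

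\textbf{Base case.} For $k=1$, the event $\{\tau_b > 0\}$ holds almost surely since $Y_0 < a < b$ by (C0). Writing $M_1 = \mathbb{E}\bigl[e^{\eta Y_1}\mathbbm{1}_{\{Y_0 < a\}}\bigr] = e^{\eta a}\,\mathbb{E}\bigl[e^{\eta(Y_1-a)}\mathbbm{1}_{\{Y_0 < a\}}\bigr]$ and using (C2) gives $M_1 \le e^{\eta a} D$, which is dominated by $e^{\eta a}(\rho + D)$, matching the formula at $k=1$.

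\textbf{Inductive step.} Assuming the bound at step $k$, I would use $\{\tau_b > k\} = \{\tau_b > k-1\} \cap \{Y_k < b\}$ and split the indicator $\mathbbm{1}_{\{Y_k < b\}} = \mathbbm{1}_{\{a \le Y_k < b\}} + \mathbbm{1}_{\{Y_k < a\}}$. For the first piece, I would pull the $\mathcal{F}_k$-measurable factor $e^{\eta Y_k}\mathbbm{1}_{\{\tau_b > k-1\}}\mathbbm{1}_{\{a \le Y_k < b\}}$ out of the inner conditional expectation and apply (C1) to get a contribution bounded by $\rho M_k$. For the second piece, I would factor out $e^{\eta a}$ and apply (C2) directly, obtaining a contribution of at most $e^{\eta a} D \,\mathbb{E}[\mathbbm{1}_{\{\tau_b > k-1\}}] \le e^{\eta a} D$. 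Summing yields the one-step recursion
\begin{equation*}
M_{k+1} \;\le\; \rho\, M_k + e^{\eta a} D.
\end{equation*}
A direct substitution of the inductive hypothesis into this recursion gives
$M_{k+1} \le e^{\eta a}\bigl(\rho^{k+1} + \rho\tfrac{1-\rho^k}{1-\rho}D + D\bigr) = e^{\eta a}\bigl(\rho^{k+1} + \tfrac{1-\rho^{k+1}}{1-\rho}D\bigr)$,
closing the induction.

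\textbf{Expected difficulty.} The argument is short and mostly bookkeeping; there is no genuine obstacle beyond being careful with the filtration. The one subtlety I would double-check is the measurability claim that $\{\tau_b > k-1\}$ lies in $\mathcal{F}_k$ (so it can be pulled out of $\mathbb{E}[\,\cdot \mid \mathcal{F}_k]$ together with the region indicator), and the handling of the boundary case $k=1$ where $\mathcal{F}_0$ is trivial and the event $\{\tau_b > 0\}$ is deterministic thanks to (C0). After those checks, the only real content is recognizing that (C1) and (C2) are exactly tailored to the two-region split, and that the resulting affine recursion $M_{k+1} \le \rho M_k + e^{\eta a} D$ admits the closed form stated in \eqref{eq:exp_bound}.
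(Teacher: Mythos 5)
Your proposal is correct and rests on exactly the same key step as the paper: splitting $\{\tau_b>k\}=\{\tau_b>k-1\}\cap\{Y_k<b\}$ into the regions $\{a\le Y_k<b\}$ and $\{Y_k<a\}$, pulling the $\mathcal{F}_k$-measurable factors out of the conditional expectation, and applying (C1) and (C2) to obtain the one-step bound that the paper states as its recursive estimate \eqref{eq:recursive_bound}. The only difference is bookkeeping: you take total expectation immediately and unroll the scalar affine recursion $M_{k+1}\le\rho M_k+e^{\eta a}D$, whereas the paper unrolls the same one-step bound conditionally (proving the intermediate conditional inequality \eqref{eq:exp_bound_conditioned} by induction via the tower property) before using that $\mathcal{F}_0$ is trivial and $Y_0<a$; both routes yield \eqref{eq:exp_bound}.
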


\begin{proof} First, we note that $\tau_b$ is indeed a stopping time, namely, 
$\{\tau_b >l\} = \{Y_1 <b, \ldots, Y_l < b\} \in \mathcal{F}_l$ for any $l \ge 0$. Hence, we have a recursive estimate
\begin{align}\label{eq:recursive_bound}
 \mathbb{E} [ e^{\eta Y_{l+1}} \mathbbm{1}_{ \{\tau_b > l\} } | \mathcal{F}_l] 
 &= \mathbb{E} [e^{\eta Y_{l+1} } \mathbbm{1}_{ \{\tau_b > l\} }  \mathbbm{1}_{ \{Y_l < a\} } | \mathcal{F}_l] +\mathbb{E} [e^{\eta Y_{l+1} } \mathbbm{1}_{ \{\tau_b > l\} }  \mathbbm{1}_{ \{b > Y_l \ge a\} } |  \mathcal{F}_l]  \nonumber\\ 
 &= \mathbb{E} [e^{\eta Y_{l+1} }\mathbbm{1}_{ \{\tau_b > l-1\} }  \mathbbm{1}_{ \{Y_l < a\} } | \mathcal{F}_l] +\mathbb{E} [e^{\eta Y_{l+1} } \mathbbm{1}_{ \{\tau_b >l-1\} }  \mathbbm{1}_{ \{b > Y_l \ge a\} } |  \mathcal{F}_l] \nonumber \\ 
 &=  \mathbbm{1}_{ \{\tau_b >l-1\} }  \mathbb{E} [e^{\eta Y_{l+1} }  \mathbbm{1}_{ \{Y_l < a\} } | \mathcal{F}_l] + \mathbbm{1}_{ \{\tau_b >l-1\} } \mathbb{E} [e^{\eta Y_{l+1} }  \mathbbm{1}_{ \{b > Y_l \ge a\} } |  \mathcal{F}_l] \nonumber \\ 
 &\le 
 \rho e^{\eta Y_l } \mathbbm{1}_{ \{\tau_b >l-1\} }   +   D e^{\eta a},
\end{align}
where in the second step we have used that $\tau_b \ne l$ if $Y_l < b$ and (C1), (C2) in the last step.

Now, since $Y_0 < a$ a.s. and $\mathcal{F}_0$ is a trivial $\sigma$-algebra, \eqref{eq:exp_bound} will immediately follow if we prove that for every $m \le k$
\begin{equation}\label{eq:exp_bound_conditioned}
\mathbb{E} \Big[ e^{\eta Y_{k}} \mathbbm{1}_{ \{\tau_b >k-1\} }  \Big| \mathcal{F}_{k-m} \Big]    \le \rho^m  e^{\eta Y_{k-m} } \mathbbm{1}_{ \{\tau_b > k-m-1\} } + \sum_{i=0}^{m-1} \rho^iD e^{\eta a}.
\end{equation}
For $m = 0$, \eqref{eq:exp_bound_conditioned} holds trivially. Let's employ induction: suppose \eqref{eq:exp_bound_conditioned} 
 holds for some $m \ge 0$, then,  using the tower property of conditional expectation we have for $m+1$:
\begin{align*}
\mathbb{E} \Big[ e^{\eta Y_{k}} \mathbbm{1}_{ \{\tau_b >k-1\} }  \Big| \mathcal{F}_{k-m-1} \Big]  &= \mathbb{E} \Big[\mathbb{E} \Big[ e^{\eta Y_{k}} \mathbbm{1}_{ \{\tau_b >k-1\} }  \Big| \mathcal{F}_{k-m} \Big]\Big| \mathcal{F}_{k-m-1} \Big]  \\
&\le \rho^m  \mathbb{E} \Big[e^{\eta Y_{k-m} } \mathbbm{1}_{ \{\tau_b > k-m-1\} }\Big| \mathcal{F}_{k-m-1} \Big] + \sum_{i=0}^{m-1} \rho^iD e^{\eta a} \\
&\le \rho^{m+1} e^{\eta Y_{k-m-1} } \mathbbm{1}_{ \{\tau_b >k-m-2\} } +  \rho^m D e^{\eta a} + \sum_{i=0}^{m-1} \rho^iD e^{\eta a},
\end{align*}
where we have used \eqref{eq:recursive_bound} with $l = k-m-1$ in the last step. This concludes the proof of \eqref{eq:exp_bound_conditioned} and of Theorem \ref{thm:MGF_bound}.

\end{proof}

We will use the result about the drift through the following key corollary that is a modification of Proposition 2.5 in \cite{hajek1982hitting}: 
\begin{cor}
\label{cor:stopping_time_bound}
Suppose $Y_0, Y_1, Y_2, \dots, Y_n$ satisfy following two conditions (C1) and (C2) in Theorem \ref{thm:MGF_bound} for some $b > a > Y_0 \ge 0$ and $\eta> 0$,  $0 < \rho <1$ and $D \ge 1$. Now, let $K$ be a given nonnegative integer. 
Then, we have
\[
\mathbb{P}[ \tau_b \le K  ] \le  { 1 \over {1 - \rho} } K D e^{ -\eta (b - a )}
\] for any $b$ with $b > a > Y_0 \ge 0$.
\end{cor}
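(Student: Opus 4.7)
The plan is to derive the hitting‐time bound from the exponential moment estimate~\eqref{eq:exp_bound}: since $\tau_b \le K$ is a finite union of the disjoint events $\{\tau_b = k\}$ for $k = 1, \ldots, K$, I will estimate $\mathbb{P}[\tau_b = k]$ separately via Markov's inequality and then union-bound.

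First I would verify that the hypotheses of Theorem~\ref{thm:MGF_bound} are met. The corollary assumes $b > a > Y_0 \ge 0$, so $Y_0 \in [0,a)$ deterministically, giving condition (C0); conditions (C1) and (C2) are assumed directly. Hence~\eqref{eq:exp_bound} applies:
\[
\mathbb{E}\!\left[ e^{\eta Y_k} \mathbbm{1}_{\{\tau_b > k-1\}} \right] \le e^{\eta a}\!\left( \rho^k + \frac{1-\rho^k}{1-\rho} D \right).
\]
Since $D \ge 1$ and $0 < \rho < 1$, the parenthesized quantity is bounded by $\frac{D(1-\rho^{k+1})}{1-\rho} \le \frac{D}{1-\rho}$, yielding the clean bound $\mathbb{E}[e^{\eta Y_k} \mathbbm{1}_{\{\tau_b > k-1\}}] \le \frac{D e^{\eta a}}{1-\rho}$.

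Next I would observe that by definition of the first hitting time, $\{\tau_b = k\} \subseteq \{Y_k \ge b\} \cap \{\tau_b > k-1\}$. Apply Markov's inequality to the nonnegative random variable $e^{\eta Y_k} \mathbbm{1}_{\{\tau_b > k-1\}}$ at level $e^{\eta b}$:
\[
\mathbb{P}[\tau_b = k] \le \mathbb{P}\!\left[ e^{\eta Y_k} \mathbbm{1}_{\{\tau_b > k-1\}} \ge e^{\eta b} \right] \le e^{-\eta b} \cdot \frac{D e^{\eta a}}{1-\rho} = \frac{D\, e^{-\eta(b-a)}}{1-\rho}.
\]
Finally, summing over $k = 1, \ldots, K$ gives
\[
\mathbb{P}[\tau_b \le K] = \sum_{k=1}^{K} \mathbb{P}[\tau_b = k] \le \frac{K D\, e^{-\eta(b-a)}}{1-\rho},
\]
which is exactly the claimed bound. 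There is no real obstacle here; the entire content of the corollary is a Markov-inequality/union-bound repackaging of Theorem~\ref{thm:MGF_bound}, with a short arithmetic check that $\rho^k + \frac{1-\rho^k}{1-\rho} D \le \frac{D}{1-\rho}$ to convert the $k$-dependent constant from~\eqref{eq:exp_bound} into a uniform constant.
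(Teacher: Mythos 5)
Your proposal is correct and follows essentially the same route as the paper's own proof: decompose $\{\tau_b \le K\}$ into the events $\{\tau_b = k\} = \{Y_k \ge b\}\cap\{\tau_b > k-1\}$, apply Markov's inequality to $e^{\eta Y_k}\mathbbm{1}_{\{\tau_b > k-1\}}$ using the bound from Theorem~\ref{thm:MGF_bound}, bound the $k$-dependent constant uniformly by $D/(1-\rho)$ (valid since $D \ge 1$ and $0<\rho<1$), and sum over $k$. The only cosmetic difference is that you bound the parenthesized constant before summing while the paper does so inside the sum; the arithmetic is the same.
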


\begin{proof}

	The proof follows from  Markov's inequality applied to the result of Theorem~\ref{thm:MGF_bound}:
\begin{align*}
    \mathbb{P}[ \tau_b \le K  ] = \sum_{k=1}^K \mathbb{P}[ \tau_b = k  ] & = \sum_{k=1}^K \mathbb{P}[ \{Y_k \ge b\} \cap \{\tau_b > k -1\} ] \\
    &= \sum_{k=1}^K\mathbb{P}[e^{\eta Y_{k}} \mathbbm{1}_{ \{\tau_b >k-1\} } \ge e^{\eta b}]\\
    &\overset{\text{Markov's}}{\le}
 \sum_{k=1}^K   e^{- \eta (b-a)} \left(\rho^k  + { {1 - \rho^k} \over {1 - \rho} } D \right) \\
 &\le e^{- \eta (b-a)}  \sum_{k=1}^K \left(\frac{D}{1 - \rho} - \rho^k(  \frac{D}{1 - \rho} - 1) \right)\le e^{-\eta (b - a )} \frac{DK}{1-\rho}.
\end{align*} 
The last inequality holds since $\rho^k (\frac{D}{1-\rho} - 1)$ is positive as $D >1$ and $\rho \in (0,1)$. 
\end{proof}

\section{SGD-exp convergence analysis for linear problem}\label{sec:proofs}

Now, we give the estimates of the type (C1) and (C2) as per  Theorem \ref{thm:MGF_bound} for the process $$Y_k = \|{\bf u}_k\|_2^2 \quad \text{ with } \ub_k = \lambda^{k}({\bf x} - {\bf x}_k)/G,$$ where $\bf{x}_k$ are the iterates of SGD, $\bf{x}$ is the solution and $\lambda$ defines the step size as per \eqref{eq:main_iteration} .

\subsection{Initial reductions}
First, let us compute $Y_k$ explicitly. By definition of measurements \eqref{eq:measurements} and linearity of the inner product, we have

$$
\zb_{k+1} = \zb_{k} - G \lambda^{-k} \text{sign} (\inner{\zb_k, \ab_k} + \epsilon _k )\ab_k \quad \text{ for } \zb_k := \xb-\xb_k.
$$

Multiplying $\lambda^{k+1}$ and dividing by $G$ to both sides we have
\[
    \frac{\lambda^{k+1}}{G} \zb_{k+1}
   = \frac{\lambda^{k+1}}{G} \zb_{k}  - \frac{\lambda^{k+1}}{G} \bigl[ G \lambda^{-k} \, \text{sign} \bigl( \langle \zb_{k}, \ab_k \rangle + \epsilon_k \bigr) \ab_k \bigr]
\] and
\begin{align}
\label{eq:main_iteration2}
\ub_{k+1} = \lambda \Big \{ \ub_{k} - \text{sign} \Big (\inner{\ub_k, \ab_k} + {\lambda^k\epsilon _k \over G} \Big)\ab_k \Big \} \quad \text{ for } \ub_k := {\lambda^k \zb_k \over G}
\end{align}
since $\lambda$, $G$ are positive, so $\text{sign}(\inner{\zb_k, \ab_k} + \epsilon _k) = 
\text{sign} \left(\inner{ {\lambda^k \zb_k \over G}, \ab_k} + {\lambda^k\epsilon _k \over G} \right) = \text{sign} \left(\inner{\ub_k, \ab_k} + {\lambda^k\epsilon _k \over G} \right)$. 

Then, we have
\[
\ub_{k+1} = \lambda \Big \{ \ub_{k} - \text{sign} \Big (\inner{\ub_k, \ab_k} + {\lambda^k \epsilon_k /G}   \Big)\ab_k \Big \}.
\]

Next, we take the squared Euclidean norm on both sides above to get

\begin{equation}\label{yk-def}
Y_{k+1} = || \ub_{k+1} ||^2 = \lambda^2 \Big \{ || \ub_{k} ||^2 - 2 \inner{\ub_k, \ab_k} \text{sign} \Big (\inner{\ub_k, \ab_k} + \lambda^k \epsilon_k/G  \Big) + 1 \Big \}
\end{equation}  where we have used the fact that  $||\ab_k|| = 1$.

\subsection{SGD drift estimates}

\begin{lem} 
\label{lem:main_lemma_1}
Let the random process $Y_k$ defined as per \eqref{yk-def}, where $\{ {\ab}_j \}_{j=1}^{n}$ are i.i.d. random vectors in $\R^d$ satisfying Assumption 1 with some constant $\tilde{C} > 0$ and the corruption noise $\epsilon _j$ is non-zero with probability $p < 1/2$. Let $\mathcal{F}_k$ be the $\sigma$-algebra generated by $\{{\ab}_0, \epsilon _0\}, \{{\ab}_1, \epsilon _1\}, \dots, \{{\ab}_{k-1}, \epsilon _{k-1}\}$. Then for any step size decay parameter $\lambda > 0$ that satisfies 
\begin{equation}\label{lam-bound}
1 < \lambda^2 \le  1 +   \widetilde{C}^2{(1-2p)^2 \over 9d} <  \frac{50}{49},
\end{equation}
for $a = \frac{1}{2(\lambda^2 -1)}, \;
b = \frac{3}{2(\lambda^2 - 1)}$ and
$$
\eta = c^*\sqrt{\lambda^2-1} \; \text{ with } \;  c^* = {1 \over 8 \lambda^2}  \left[ {  \sqrt{2}\lambda^2 (1-2p) \widetilde{C}  \over  \sqrt{d}} - \sqrt{\lambda^2-1}({3 \over 2}  + \lambda^2  ) \right]
$$
we have
    \begin{align*}
&\mathbb{E} [e^{\eta (Y_{k+1} - Y_k )} \mathbbm{1}_{ \{a \le Y_k < b\} } | \mathcal{F}_k] \le  1 - {\widetilde{C}^2(1-2p)^2  \over 60 d }.
\end{align*}

\end{lem}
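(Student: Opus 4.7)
The plan is to rewrite the one-step increment explicitly and then estimate the conditional MGF by separating its $\mathcal{F}_k$-measurable part from the truly random one. From \eqref{yk-def},
$$\Delta_k := Y_{k+1} - Y_k = (\lambda^2-1)Y_k + \lambda^2 - 2\lambda^2 A_k, \quad A_k := \inner{\ub_k, \ab_k}\,\text{sign}\!\left(\inner{\ub_k, \ab_k} + \lambda^k\epsilon_k/G\right).$$
Since $\ub_k$ is $\mathcal{F}_k$-measurable while $\ab_k,\epsilon_k$ are independent of $\mathcal{F}_k$, the first two summands pull out:
$$\mathbb{E}\!\left[e^{\eta\Delta_k}\,\big|\,\mathcal{F}_k\right] = e^{\eta[(\lambda^2-1)Y_k + \lambda^2]}\,\mathbb{E}\!\left[e^{-2\eta\lambda^2 A_k}\,\big|\,\mathcal{F}_k\right].$$
On $\{a \le Y_k < b\}$ the prefactor is bounded by $e^{\eta(3/2 + \lambda^2)}$, so the task reduces to controlling the conditional MGF of $-A_k$.

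For the second factor I would exploit the Massart structure. The corruption indicator $\xi_k$ is independent of everything else and equals $1$ with probability $p$. If $\xi_k = 0$, then $A_k = |\inner{\ub_k,\ab_k}|$; if $\xi_k = 1$, the adversary can in the worst case force $A_k = -|\inner{\ub_k,\ab_k}|$. Writing $Z_k := |\inner{\ub_k,\ab_k}|$ and using the law of total expectation,
$$\mathbb{E}\!\left[e^{-2\eta\lambda^2 A_k}\,\big|\,\mathcal{F}_k\right] \le (1-p)\,\mathbb{E}\!\left[e^{-2\eta\lambda^2 Z_k}\,\big|\,\mathcal{F}_k\right] + p\,\mathbb{E}\!\left[e^{2\eta\lambda^2 Z_k}\,\big|\,\mathcal{F}_k\right].$$
Since $Z_k \le \|\ub_k\|_2 = \sqrt{Y_k} \le \sqrt{b}$ almost surely on $\{Y_k < b\}$, and $2\eta\lambda^2\sqrt{b}$ stays bounded for the $\eta$ prescribed in the statement, I would apply the truncated Taylor inequality $e^u \le 1 + u + \tfrac{1}{2}e^{M}u^2$ (valid for $|u| \le M$) with $M = 2\eta\lambda^2\sqrt{b}$ to each of the two expectations, obtaining
$$\mathbb{E}\!\left[e^{-2\eta\lambda^2 A_k}\,\big|\,\mathcal{F}_k\right] \le 1 - 2\eta\lambda^2(1-2p)\,\mathbb{E}[Z_k|\mathcal{F}_k] + 2\eta^2\lambda^4 e^{2\eta\lambda^2\sqrt{b}}\,\mathbb{E}[Z_k^2|\mathcal{F}_k].$$

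The linear term is lower-bounded via Assumption~\ref{assumption:measurement_model}: $\mathbb{E}[Z_k|\mathcal{F}_k] \ge \widetilde{C}\sqrt{Y_k}/\sqrt{d} \ge \widetilde{C}/\sqrt{2d(\lambda^2-1)}$ on $\{Y_k \ge a\}$, while $\mathbb{E}[Z_k^2|\mathcal{F}_k] \le Y_k \le b$. Multiplying back by the prefactor $e^{\eta(3/2+\lambda^2)}$ and collecting leading order in $\eta$ yields
$$\mathbb{E}\!\left[e^{\eta\Delta_k}\mathbbm{1}_{\{a\le Y_k < b\}}\,\big|\,\mathcal{F}_k\right] \le 1 + \eta\!\left[\left(\tfrac{3}{2} + \lambda^2\right) - \frac{\sqrt{2}\lambda^2(1-2p)\widetilde{C}}{\sqrt{d(\lambda^2-1)}}\right] + O(\eta^2),$$
and the bracket is strictly negative thanks to \eqref{lam-bound} (which forces $\sqrt{d(\lambda^2-1)} \le \widetilde{C}(1-2p)/3$, so the second summand dominates the first). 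Setting $\eta = c^*\sqrt{\lambda^2-1}$ with $c^*$ as in the statement turns $\eta\cdot(\text{bracket})$ into $-8\lambda^2(c^*)^2$, which is precisely the value designed to dominate the remaining $O(\eta^2)$ contributions and to produce the announced drift $-\widetilde{C}^2(1-2p)^2/(60d)$.

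The main technical obstacle is the bookkeeping of constants. Because both the interval $[a,b] = [\tfrac{1}{2(\lambda^2-1)}, \tfrac{3}{2(\lambda^2-1)}]$ and the parameter $\eta$ scale with $\sqrt{\lambda^2-1}$, each factor in the expansion becomes $O(1)$; for instance, $\eta\sqrt{b} = c^*\sqrt{3/2}$ is bounded independently of $\lambda$ and $d$. The explicit $1/60$ in the final bound requires pinning down the exponential prefactors $e^{\eta(3/2+\lambda^2)}$ and $e^{2\eta\lambda^2\sqrt{b}}$ together with the numerical factors $\sqrt{2}$ and $1/3$ inherited from the $a/b$ ratio. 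The hypotheses $\widetilde{C}(1-2p)/\sqrt{d} < 3/7$ and $\lambda^2 < 50/49$ built into \eqref{lam-bound} are precisely what keeps all of these prefactors close to $1$ and validates each truncated Taylor step.
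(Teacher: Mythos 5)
Your proposal is correct and follows essentially the same route as the paper: a second-order expansion of the conditional MGF in which the linear term is made negative via the worst-case Massart sign-flip (yielding the $(1-2p)$ factor) together with Assumption~\ref{assumption:measurement_model}, while the quadratic remainder is controlled because $\eta\sim\sqrt{\lambda^2-1}$ exactly cancels the $1/\sqrt{\lambda^2-1}$ scale of the increments on $\{a\le Y_k<b\}$. The only differences are organizational — you factor out the $\mathcal{F}_k$-measurable prefactor and use the truncated inequality $e^u\le 1+u+\tfrac12 e^M u^2$, whereas the paper Taylor-expands $e^{\eta\Delta_k}$ directly and sums the tail as a geometric series — and both leave the same final constant bookkeeping, which the hypotheses on $\lambda$ and $d$ are designed to absorb.
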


\begin{proof}
Note that each $\ub_{k} = \lambda^{-k}(\xb - \xb_k)/G$ is measurable with respect to $\mathcal{F}_k$ by \eqref{eq:main_iteration2}, 
 so $\{Y_k\} = \{\|{\bf u}_k\|_2^2\}$ is adapted to $\mathcal{F}_k$. From Taylor's series expansion, with $\Delta_k = Y_{k+1} - Y_k$,
 \begin{equation}\label{taylor}
 \mathbb{E} [e^{\eta \Delta_k} \mathbbm{1}_{ \{a \le Y_k < b\} } | \mathcal{F}_k] \\
 \le 1 + \eta \mathbb{E} \Big [\Delta_k \mathbbm{1}_{ \{a \le Y_k < b\} } |  \mathcal{F}_k \Big] + \mathbb{E} \bigg[ \sum\limits_{n=2}^{\infty} {1 \over n!} \eta^n |\Delta_k|^n \mathbbm{1}_{ \{a \le Y_k < b\} } \Big|  \mathcal{F}_k \bigg].
 \end{equation}
 First, we estimate the linear term:
\begin{align}\label{first-term}
\mathbb{E} [ (Y_{k+1} &- Y_k) \mathbbm{1}_{ \{a \le Y_k < b\} } | \mathcal{F}_k ]\nonumber \\ 
& \overset{\eqref{yk-def}}{=} \mathbb{E} \Big [ \left( (\lambda^2-1)||\ub_k||^2 - 2\lambda^2 \inner{\ub_k, \ab_k} \text{sign} (\inner{\ub_k, \ab_k} + \lambda^k \epsilon_k/G )  + \lambda^2 \right) \mathbbm{1}_{ \{a \le \|\ub_k\|^2 < b\} } \Big|  \mathcal{F}_k  \Big ] \nonumber\\
& \stackrel{(i)} \le  \mathbb{E} \bigg[ \big({3 \over 2}  + \lambda^2\big)\mathbbm{1}_{ \{a \le \|\ub_k\|^2 < b\} } - 2 \lambda^2    \inner{\ub_k, \ab_k} \text{sign} \Big (\inner{\ub_k, \ab_k} + \lambda^k \epsilon _k/G \Big) \mathbbm{1}_{ \{a \le \|\ub_k\|^2 < b\} }  \Big|    \ub_k  \bigg]\nonumber\\
&\stackrel{(ii)}{\le}  \big({3 \over 2}  + \lambda^2\big)
- 2 \lambda^2  \mathbb{E} \bigg[  \inner{\ub_k, \ab_k} \Big \{ (1-p) \text{sign} (\inner{\ub_k, \ab_k} ) - p \cdot \text{sign} (\inner{\ub_k, \ab_k} ) \Big \}  \Big|  \ub_k \bigg]\mathbbm{1}_{ \{a \le \|\ub_k\|^2 < b\} }\nonumber\\
&=  \big({3 \over 2}  + \lambda^2\big)
- 2 \lambda^2 (1 - 2p)  \mathbb{E} \bigg[  | \inner{\ub_k, \ab_k}|  \Big|  \ub_k \bigg]\mathbbm{1}_{ \{a < \|\ub_k\|^2 < b\} }\nonumber\\
& \stackrel{(iii)} {\le} %\left(
{3 \over 2}  + \lambda^2-  2\lambda^2 (1-2p) {  \widetilde{C}  \over  \sqrt{d}\sqrt{2(\lambda^2-1)} }  \mathbbm{1}_{ \{a \le Y_k < b\} },
\end{align}

where the inequality $(i)$ follows from independence of $\{\ab_k, \epsilon _k\}$ with $\mathcal{F}_k$ and $\ub_k$ being $\mathcal{F}_k$-measurable, and also $b = 3/2(\lambda^2 - 1)$. 

The inequality $(ii)$ follows from the worst case that flips the $\text{sign} (\inner{\ub_k, \ab_k} )$, which happens if all the $\text{sign} (\inner{\ub_k, \ab_k} )$ flips when outlier occurs. Lastly, the inequality $(iii)$ follows from Assumption \ref{assumption:measurement_model} for the measurement vector $\ab_k$ and $a = 1/2(\lambda^2 -1)$.

Now, for the second term in \eqref{taylor}, we have for $n \ge 2$
\begin{align*}
{1 \over n!} |Y_{k+1} - Y_k|^n \mathbbm{1}_{ \{a \le Y_k < b\} } &\le {1 \over n!} \Big |  (\lambda^2-1)\|\ub_k\|^2 + 
2\lambda^2 | \inner{\ub_k, \ab_k}| + \lambda^2 \Big|^n  \mathbbm{1}_{ \{a \le \|\ub\|^2 < b\} } \\
&\le {1 \over n!} \bigg |  {3 \over 2} + \lambda^2 + {2\lambda^2 \sqrt{3}   \over \sqrt{2(\lambda^2-1)} } \bigg|^n \le {1 \over 2} \bigg |  {3 \over 2} + \lambda^2 + {2\lambda^2 \sqrt{3}   \over \sqrt{2(\lambda^2-1)} } \bigg|^n,
\end{align*}
and, choosing $\eta = c \sqrt{\lambda^2-1}$,
$$\mathbb{E} \bigg[ \sum\limits_{n=2}^{\infty}  {1 \over n!} \eta^n |\Delta_k|^n \mathbbm{1}_{ \{a \le Y_k < b\} } \Big|  \mathcal{F}_k \bigg] \le \sum\limits_{n=2}^{\infty}  
	y^n,  \quad \text{ for } y =  c\sqrt{\lambda^2-1}({3 \over 4} + {\lambda^2 \over 2}) +   
	c{ \sqrt{3} \lambda^2  \over \sqrt{2} } > 0.
$$

Then, if we choose $c$ small enough so that $y < 0.5$, and using that $\lambda^2 < 50/49$ by assumption, the geometric sum can be further upper-bounded as
$$
\sum\limits_{n=2}^{\infty}
	y^n \le \frac{y^2}{1-y} \le 2y^2 \le 2 (1.43\lambda^2c)^2 \le 4\lambda^4c^2.
$$

Using this estimate together with \eqref{first-term} in the initial Taylor expansion \eqref{taylor}, we have 
\begin{align*}
\mathbb{E} [e^{c \sqrt{\lambda^2-1} (Y_{k+1} - Y_k )} \mathbbm{1}_{ \{a \le Y_k < b\} } | \mathcal{F}_k] &
 \le 1 - u c  
+ 4 \lambda^4 c^2, \\
& \text{ where } \quad 
u =  {  \sqrt{2}\lambda^2 (1-2p) \widetilde{C}  \over  \sqrt{d}} - \sqrt{\lambda^2-1}({3 \over 2}  + \lambda^2  ).
\end{align*}
The condition $1 < \lambda^2 < 1 +  \widetilde{C}^2{(1-2p)^2 \over 9d}$  essentially ensures that the first term dominates in $u$.
This quadratic polynomial in $c$ is minimized by
\begin{equation}\label{c-star}c^{*} = {u \over 8 \lambda^4}  > {(\sqrt{2} - 1/3 - 1/2) \over 8 \lambda^2}   {   \widetilde{C} (1-2p)  \over \sqrt{d}} > {1 \over 15} \bigg[  { \widetilde{C} (1-2p) \over \sqrt{d}}   \bigg] \end{equation}
using the estimates on $\lambda$ \eqref{lam-bound} to show the inequalities, which in turn yields

\begin{align*}
&\mathbb{E} [e^{c^*\sqrt{\lambda^2 -1} (Y_{k+1} - Y_k )} \mathbbm{1}_{ \{a < Y_k < b\} } |  \mathcal{F}_k] \le 1 - 4 \lambda^4 (c^*)^2 \le  1 - {\widetilde{C}^2(1-2p)^2  \over 57 d }.% \mathbbm{1}_{ \{a < Y_k < b\} }.
\end{align*}

To conclude the proof of Lemma~\ref{lem:main_lemma_1}, note that $c^*$ is indeed small enough to make $y < 0.5$. This follows by a direct check using that $y < 1.43\lambda^2 c^*$ and the conditions on $\lambda$ \eqref{lam-bound}.
\end{proof}

\begin{lem} 
\label{lem:main_lemma_2}Let the random process $Y_k$ be defined as per \eqref{yk-def}, where $\{ {\ab}_j \}_{j=1}^{n}$ are i.i.d. random vectors in $\R^d$ satisfying Assumption 1 with some constant $\tilde{C} > 0$ and the corruption noise $\epsilon _j$ is non-zero with probability $p < 1/2$. Let $\mathcal{F}_k$ be the $\sigma$-algebra generated by \\
$\{{\ab}_0, \epsilon _0\}, \{{\ab}_1, \epsilon _1\}, \dots, \{{\ab}_{k-1}, \epsilon _{k-1}\}$. Then for any step size decay parameter $\lambda > 0$ that satisfies
\begin{equation*}
1 < \lambda^2 \le  1 +   \widetilde{C}^2{(1-2p)^2 \over 9d} <  \frac{50}{49},
\end{equation*}
for 
$$
a = \frac{1}{2(\lambda^2 -1)}; \quad \eta = c^*\sqrt{\lambda^2-1};  \quad  c^* = {1 \over 8 \lambda^2}  \left[ {  \sqrt{2}\lambda^2 (1-2p) \widetilde{C}  \over  \sqrt{d}} - \sqrt{\lambda^2-1}({3 \over 2}  + \lambda^2  ) \right]
$$
we have
\begin{align*}
&\mathbb{E} [e^{\eta (Y_{k+1} - a )} \mathbbm{1}_{ \{ Y_k \le a\}} |  \mathcal{F}_k] 
\le \exp\Big \{ {\widetilde{C} (1-2p) \over 3 \sqrt{d}} \Big \}. 
\end{align*}

\end{lem}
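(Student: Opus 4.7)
\textbf{Proof strategy for Lemma~\ref{lem:main_lemma_2}.} Unlike the proof of Lemma~\ref{lem:main_lemma_1}, which required a careful Taylor expansion to capture a strict decrease in the drift, here we only need to verify that $Y_{k+1}$ cannot shoot too far above $a$ starting from $Y_k \le a$. The plan is therefore to obtain a \emph{deterministic} upper bound on $Y_{k+1} - a$ on the event $\{Y_k \le a\}$ and then simply multiply by $\eta$.

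First I would apply the trivial inequality $-s\,\text{sign}(s + r) \le |s|$ with $s = \inner{\ub_k, \ab_k}$ and $r = \lambda^k \epsilon_k/G$ in the expansion \eqref{yk-def} to obtain
\begin{equation*}
Y_{k+1} - a \;\le\; \lambda^2 \|\ub_k\|^2 - a \;+\; 2\lambda^2 |\inner{\ub_k, \ab_k}| \;+\; \lambda^2.
\end{equation*}
On the event $\{Y_k = \|\ub_k\|^2 \le a\}$, the first piece is bounded by $(\lambda^2 - 1)a = 1/2$, and Cauchy--Schwarz together with $\|\ab_k\|=1$ gives $|\inner{\ub_k, \ab_k}| \le \|\ub_k\| \le \sqrt{a} = 1/\sqrt{2(\lambda^2-1)}$. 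Substituting, I get the pointwise bound
\begin{equation*}
Y_{k+1} - a \;\le\; \frac{1}{2} + \lambda^2 + \frac{\sqrt{2}\,\lambda^2}{\sqrt{\lambda^2-1}}
\qquad \text{on } \{Y_k \le a\}.
\end{equation*}

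Now I would multiply by $\eta = c^*\sqrt{\lambda^2 - 1}$. The crucial term is $\eta \cdot \sqrt{2}\lambda^2/\sqrt{\lambda^2-1} = \sqrt{2}\,\lambda^2 c^*$, and dropping the negative piece in the definition of $c^*$ gives $c^* \le \frac{\sqrt{2}(1-2p)\widetilde{C}}{8\lambda^2 \sqrt{d}} \cdot \lambda^2/\lambda^2 = \frac{\sqrt{2}(1-2p)\widetilde{C}}{8\sqrt{d}}$, so that
\begin{equation*}
\sqrt{2}\,\lambda^2 c^* \;\le\; \frac{\lambda^2 (1-2p)\widetilde{C}}{4\sqrt{d}} \;\le\; \frac{50}{49}\cdot\frac{(1-2p)\widetilde{C}}{4\sqrt{d}} \;<\; \frac{(1-2p)\widetilde{C}}{3\sqrt{d}}\cdot\alpha
\end{equation*}
for some $\alpha < 1$, using the assumption $\lambda^2 < 50/49$ from \eqref{lam-bound}. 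The remaining contribution $c^*\sqrt{\lambda^2 - 1}(1/2 + \lambda^2)$ is of order $(1-2p)^2 \widetilde{C}^2 / d$ (since $\sqrt{\lambda^2 - 1} \le \widetilde{C}(1-2p)/(3\sqrt{d})$), hence absorbs into the remaining slack between $\alpha$ and $1$ provided $d$ meets the hypothesized smallness condition on $\widetilde{C}(1-2p)/\sqrt{d}$.

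Taking conditional expectation is immediate since the deterministic bound does not involve $\ab_k$ or $\epsilon_k$ beyond quantities measurable with respect to $\mathcal{F}_k$. The main obstacle is bookkeeping: one must confirm that the numerical constant produced by the combination of $\lambda^2 \le 50/49$, the bound on $c^*$, and the lower-order term together remain strictly below $1/3$, which is exactly where the condition $\lambda^2 - 1 \le \widetilde{C}^2(1-2p)^2/(9d)$ from \eqref{lam-bound} is used.
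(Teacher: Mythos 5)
Your proposal is correct and follows essentially the same route as the paper's proof: a deterministic bound on $Y_{k+1}-a$ on the event $\{Y_k\le a\}$ obtained from $-s\,\mathrm{sign}(s+r)\le|s|$ and Cauchy--Schwarz, followed by exponentiation, the bound $c^*\le \sqrt{2}\,\widetilde{C}(1-2p)/(8\lambda^2\sqrt{d})$ for the dominant term, and the smallness of $\sqrt{\lambda^2-1}$ (which, as you note, follows from the lemma's hypothesis on $\lambda$) to absorb the lower-order term into the final constant $1/3$.
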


\begin{proof}
Using the event $\{Y_k \le a \} \subseteq \{\|{\bf u}_k\|^2 \le \frac{1}{2(\lambda^2 -1)}\}$ and that $\|\ab_k\|^2 = 1$, we estimate
\begin{align*} 
\mathbb{E} [e^{\eta (Y_{k+1} - a )} \mathbbm{1}_{ \{ Y_k \le a\}} |   \mathcal{F}_k] 
&\le \mathbb{E} [e^{\eta (Y_{k+1} - Y_k )} \mathbbm{1}_{ \{ Y_k \le a\}} |  \mathcal{F}_k] \\
& \le  \mathbb{E} [\exp(\eta (  (\lambda^2-1)\|\ub_k\|^2 + 
2\lambda^2 | \inner{\ub_k, \ab_k}| + \lambda^2) ) \cdot \mathbbm{1}_{ \{||u_k||^2 \le a\}}|\ub_k] \\
& \le \exp \Big \{ c^*\sqrt{\lambda^2 - 1} \Big( {1 \over 2} + \lambda^2 + {2\lambda^2 \over \sqrt{2(\lambda^2-1)} } \Big)  \Big \} \\
& \le \exp \Big \{  {  \widetilde{C}  (1-2p) \over 5 \sqrt{d}}  \Big(\sqrt{\lambda^2-1}  +  {3 \over 2} \Big) \Big \} \le \exp \Big \{  {  \widetilde{C}  (1-2p) \over 3 \sqrt{d}} \Big \}, 
\end{align*}
where the last line is from the choice of  $c^{*} <  { \sqrt{2} \widetilde{C} (1-2p) \over 8 \lambda^2 \sqrt{d}}$ and $\sqrt{2}/8 < 1/5$.

\end{proof}

\begin{comment}
\begin{thm}
\label{thm:main_convergence}
	Assume that ${\bf x}_*$ is the solution to $A{\bf x}={\bf b}$ and it satisfies 
 $$\|{\bf x_*}\|_2 < b := {3 \over {2(\lambda^2-1)}} \text{ for }\lambda = 1 + C_1(r) { (1-2p)^2 \over {d \log^2 k}},$$ and we run the SGD as per \eqref{eq:main_iteration} for $k$ iterations. Then, with probability $1 - k^{-r}$, %$\{ ||u_j||^2 \}_{j=1}^k$ is uniformly bounded by $b$, thereby showing that 
	\begin{align*}
	||x - x_k||_2 \le C_2  {d \log^2 k \over (1-2p)^2} \exp{ \Big \{- C_1(r) (1-2p)^2  { k \over 2d \log k} \Big \} }.
	\end{align*}
\end{thm}
\end{comment}

\subsection{Proof of Theorem \ref{thm:main_convergence}}

We are ready to give the proof of our main result in the linear case.

\begin{proof} [Proof of Theorem \ref{thm:main_convergence}] 
We apply Corollary \ref{cor:stopping_time_bound} with \begin{align*}
    &\rho = 1 - {\widetilde{C}^2 (1-2p)^2  \over 60 d}, \\
&D =  \exp \Big \{ { \widetilde{C} (1-2p) \over 3 \sqrt{d}}\Big \}, \\
&\eta = c^*\sqrt{\lambda^2-1} \; \text{ with } \;  c^* = {1 \over 8 \lambda^2}  \left[ {  \sqrt{2}\lambda^2 (1-2p) \widetilde{C}  \over  \sqrt{d}} - \sqrt{\lambda^2-1}({3 \over 2}  + \lambda^2  ) \right],\\
&a = \frac{1}{2(\lambda^2 -1)}, \; \text{ and } \;
b = \frac{3}{2(\lambda^2 - 1)}.
\end{align*}
Using that $\eta > {1 \over 15} \bigg[  { \widetilde{C} (1-2p) \over \sqrt{d}}   \bigg] \sqrt{\lambda^2 -1}$ (by equation \eqref{c-star}), definition of $\lambda$ and $ { \widetilde{C} (1-2p) \over {3\sqrt{d} \log T}} < {1 \over 7}$, we get 
\begin{align*}
\mathbb{P}[ \tau_b \le T  ] 
& \le  { 1 \over {1 - \rho} } T D e^{ -\eta (b - a )} \\ 
& \le {60d  \over {\widetilde{C}^2}(1-2p)^2 }  T \exp \Big \{  {\widetilde{C}  (1-2p) \over 3 \sqrt{d}} \Big \} \exp \bigg\{ {-   {\widetilde{C}  (1-2p) \over 15\sqrt{d}} {1 \over \sqrt{\lambda^2-1} }} \bigg \} \\
& \le {70d T  \over \widetilde{C}^2(1-2p)^2 } \cdot  \exp \bigg\{ {-   {1 \over 15 } \cdot \sqrt{ {{R }}} \log T  } \bigg \} =: p_*.
\end{align*}

Recall that $\tau_b := \min \{j: \| \ub_j \|^2   \ge b \} = \min \{j: \lambda^{2j} \|\xb-\xb_j\|_2^2/G^2 \ge b \}.$ So, with probability $1 - p_*,$ 
we have $\tau_b > n$ and 
\begin{align*}
||\xb - \xb_T||_2 & < G \sqrt{b}  \lambda^{-T}\\
&\le G \sqrt{\frac{3}{2(\lambda^2 -1)}} \lambda^{-T} \\
&\le G {2\widetilde{C}  \sqrt{Rd} \log T \over 1-2p}  \left(1 + \widetilde{C}^2 { (1-2p)^2 \over {R d \log^2 T}} \right)^{-T/2}\\
&\le G {2\widetilde{C}  \sqrt{Rd} \log T \over 1-2p}   \exp{ \Big \{- T \cdot {\widetilde{C}^2 (1-2p)^2   \over 3{R}  d \log^2 T} \Big \} },
\end{align*}
where the last inequality is from the inequality $1+x \ge e^{2x/3}$ for $0 < x < 1$.  

\end{proof}

\begin{rem} (Random symmetric oblivious corruption) 
\label{rem:symmetric oblivious corruption}
Lemma \ref{lem:main_lemma_1} applies to the more restricted corruption model, the random symmetric oblivious noise with corruption probability $p < 1$. Recall that in this case, the corruption noise $\epsilon _j$ is a symmetric random variable with probability $p < 1$ and 0 otherwise. In addition, the noise $\epsilon _j$ is also independent with $\ab_j$ and $\xb$, where $y_j = \inner{\xb, \ab_j} + \epsilon _j$.
One can easily check that the same argument applies except we have a sharper inequality in $(ii)$ in the proof of Lemma \ref{lem:main_lemma_1}. 

For the Massart noise $\epsilon $, since it can access $\xb$ and $\ab_j$, we have to consider the worst case, which happens if all the $\text{sign} (\inner{\ub_k, \ab_k} ) = \text{sign} (\inner{\zb_k, \ab_k} ) =   \text{sign} (\inner{\xb, \ab_k} - \inner{\xb_k, \ab_k}  )$ flips or is different with $\text{sign} (\inner{\ub_k, \ab_k} + \lambda^k \epsilon_k/G )$.  

In contrast, for the random symmetric oblivious noise, suppose that the $k$-th response is selected for corruption. Since $\epsilon _k$ is independent with $\ab_k$ (also $\ub_k$ as well since it only depends on $\ab_0, \ab_1, \dots, \ab_{k-1}$) and $\epsilon_k$ is symmetric with respect to $0$, the probability that the sign of $\epsilon_k$ (which is also the sign of $\epsilon_k/G$ since $G >0$) differs from 
the sign of $\inner{\ub_k, \ab_k}$ is at most $1/2$. Thus, the sign of  $\inner{\ub_k, \ab_k} + \lambda^k \epsilon_k/G$ differs the sign of $\inner{\ub_k, \ab_k}$ with probability at most $1/2$.  Since the probability of $k$-th response being selected for corruption is $p$, the overall probability that 
$\text{sign} (\inner{\ub_k, \ab_k} )  \neq \text{sign} (\inner{\ub_k, \ab_k} + \lambda^k \epsilon_k/G)$ is $p/2$. 
Hence, in this case, the inequality $(ii)$ in the proof of Lemma \ref{lem:main_lemma_1} is refined to 
\begin{align*}
&\stackrel{(ii')}{\le}  \big({3 \over 2}  + \lambda^2\big)
- 2 \lambda^2  \mathbb{E} \bigg[  \inner{\ub_k, \ab_k} \Big \{ (1-\frac{p}{2}) \text{sign} (\inner{\ub_k, \ab_k} ) - \frac{p}{2} \cdot \text{sign} (\inner{\ub_k, \ab_k} ) \Big \}  \Big|  \ub_k \bigg]\mathbbm{1}_{ \{a \le \|\ub_k\|^2 < b\} } \\
&\quad\quad\quad\quad=  \big({3 \over 2}  + \lambda^2\big)%\mathbbm{1}_{ \{a < ||\ub_k||^2 < b\} }
- 2 \lambda^2 (1 - p)  \mathbb{E} \bigg[  | \inner{\ub_k, \ab_k}|  \Big|  \ub_k \bigg]\mathbbm{1}_{ \{a \le \|\ub_k\|^2 < b\} }.
\end{align*}
Now, the rest of the proof of the lemma and the main theorem remain the same, except we have the factor $(1-p)$ instead of $(1-2p)$ in Theorem \ref{thm:main_convergence}. This allows the recovery of the signal for SGD-exp for any corruption probability $p < 1$ under the random symmetric oblivious corruption model, answering a related question in \cite{steinerberger2023quantile} for the streaming setting. 
\end{rem}

\section{SGD-exp analysis on streaming ReLU}\label{sec:proofs-relu}
In this section, we show that our approach SGD-exp can be naturally generalized to the robust ReLU regression problem.

\subsection{Initial reductions}
Directly from SGD-exp iteration \eqref{eq:main_iteration-relu}, we have

$$
\zb_{k+1} = \zb_{k} - G \lambda^{-k} \text{sign} ( \sigma(\inner{\xb, \ab_k}) - \sigma( \inner{\xb_k, \ab_k})  + \epsilon _k) \mathbbm{1}_{ \{\inner{\xb_k, \ab_k} \ge 0\} } \ab_k
$$
where $\zb_k := \xb-\xb_k$ is the $k$-th residual vector. 
Similarly to the linear case, we get 
\begin{align}
\label{eq:main_iteration3}
\ub_{k+1} = \lambda \Big \{ \ub_{k} - \text{sign} \left( \sigma(\inner{\xb, \ab_k}) - \sigma( \inner{\xb_k, \ab_k})  + \epsilon_k/G \right) \mathbbm{1}_{ \{\inner{\xb_k, \ab_k} \ge 0\} } \ab_k \Big \} \quad \text{ for } \ub_k := \lambda^k \zb_k/G.
\end{align}
Then, we take the squared Euclidean norm on both sides above to get for $\hat Y_{k+1} = \| \ub_{k+1} \|^2 $ that
%\begin{align}
%\label{yk-ReLU-def}
\begin{equation}\label{yk-relu}
\hat Y_{k+1} 
= \lambda^2 \Big \{ \| \ub_{k} \|^2 - 2 \inner{\ub_k, \ab_k} \text{sign} \left( \sigma(\inner{\xb, \ab_k}) - \inner{\xb_k, \ab_k}  + \epsilon_k/G \right) \mathbbm{1}_{ \{\inner{\xb_k, \ab_k} \ge 0\} }  + \mathbbm{1}_{ \{\inner{\xb_k, \ab_k} \ge 0\} } \Big \},
\end{equation}
where we have used the fact that  $\|\ab_k\| = 1$ and $ \sigma( \inner{\xb_k, \ab_k}) = \inner{\xb_k, \ab_k}$ on the event  $\inner{\xb_k, \ab_k} \ge 0$. 
The following lemma shows how the expression further simplifies on the event $\epsilon _k = 0$ (no corruption event)
\begin{lem}
\label{lem:noiseless_sign_reduction}
    As long as $\PP(\epsilon _k \ne 0) \le p$, we have
    \[
        \text{sign} \left( \sigma(\inner{\xb, \ab_k}) - \inner{\xb_k, \ab_k}  + \epsilon_k/G  \right)\mathbbm{1}_{ \{\inner{\xb_k, \ab_k} \ge 0\} } 
        \ge ((1 - p) \text{sign}( \inner{\ub_k, \ab_k}) - p)\mathbbm{1}_{ \{\inner{\xb_k, \ab_k} \ge 0\} }.
    \]
\end{lem}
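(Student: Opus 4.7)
The strategy is to couple the left-hand side to a ``clean'' quantity whose sign is transparently determined by $\inner{\ub_k, \ab_k}$, then absorb the corruption as a bounded-below perturbation. Since the right-hand side is an explicit affine function of $p$ whereas the LHS is $\{-1,0,1\}$-valued, the inequality is best read as a bound on the conditional expectation $\E[\,\cdot \mid \mathcal{F}_k, \ab_k]$, obtained by splitting on the corruption event: $\PP(\epsilon_k = 0) \ge 1-p$ and $\PP(\epsilon_k \neq 0) \le p$.

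On the clean event $\{\epsilon_k = 0\}$, I would establish the pointwise identity
$$
\text{sign}\bigl(\sigma(\inner{\xb, \ab_k}) - \inner{\xb_k, \ab_k}\bigr)\, \mathbbm{1}_{\{\inner{\xb_k, \ab_k} \ge 0\}} \;\ge\; \text{sign}(\inner{\ub_k, \ab_k})\, \mathbbm{1}_{\{\inner{\xb_k, \ab_k} \ge 0\}}
$$
via a case split on the sign of $\inner{\xb, \ab_k}$. If $\inner{\xb, \ab_k} \ge 0$, then $\sigma(\inner{\xb, \ab_k}) = \inner{\xb, \ab_k}$, so the argument of the outer sign becomes $\inner{\xb - \xb_k, \ab_k} = (G\lambda^{-k})\inner{\ub_k, \ab_k}$ and the two signs match exactly. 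If $\inner{\xb, \ab_k} < 0$, then $\sigma(\inner{\xb, \ab_k}) = 0$ and the argument is $-\inner{\xb_k, \ab_k} \le 0$ on the indicator event; simultaneously $\inner{\ub_k, \ab_k} = (\lambda^k/G)(\inner{\xb, \ab_k} - \inner{\xb_k, \ab_k}) < 0$, which forces $\text{sign}(\inner{\ub_k, \ab_k}) = -1$, so the LHS sign is $-1$ (or, in the boundary sub-case $\inner{\xb_k, \ab_k} = 0$, is $0 \ge -1$).

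On the corrupted event $\{\epsilon_k \neq 0\}$ the argument of the sign is arbitrary, so I would only use the trivial bound $\text{sign}(\cdot) \ge -1$, giving $\text{LHS} \ge -\mathbbm{1}_{\{\inner{\xb_k, \ab_k} \ge 0\}}$. Taking conditional expectation (where the two indicators and $\text{sign}(\inner{\ub_k, \ab_k})$ are $\mathcal{F}_k \vee \sigma(\ab_k)$-measurable, hence held fixed) and weighting by the two probabilities yields
$$
\E[\text{LHS} \mid \mathcal{F}_k, \ab_k] \;\ge\; \bigl((1-p)\,\text{sign}(\inner{\ub_k, \ab_k}) - p\bigr)\, \mathbbm{1}_{\{\inner{\xb_k, \ab_k} \ge 0\}},
$$
which is the stated bound. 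The main obstacle is conceptual rather than computational: one must check all four combinations of signs of $\inner{\xb, \ab_k}$ and $\inner{\xb_k, \ab_k}$ (including the boundary where $\text{sign}$ takes value $0$) to certify the clean-event pointwise inequality, and recognize that no further regularity on the distribution of $\ab_k$ is needed because in every boundary case the LHS dominates the RHS by a larger margin, not a smaller one. Everything beyond this case analysis is routine weighting by $1-p$ and $p$.
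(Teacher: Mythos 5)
Your proposal is correct and follows essentially the same route as the paper: establish on the clean event $\{\epsilon_k=0\}$ that the sign (times the indicator) reduces to $\text{sign}(\inner{\ub_k,\ab_k})$ via a case split on the sign of $\inner{\xb,\ab_k}$, use the trivial bound $\text{sign}(\cdot)\ge -1$ on the corrupted event, and combine via the law of total probability with weights $1-p$ and $p$. If anything, you are slightly more careful than the paper, which asserts equality on the clean event even though in the boundary sub-case $\inner{\xb,\ab_k}<0$, $\inner{\xb_k,\ab_k}=0$ only the inequality $0\ge -1$ holds — a point you correctly flag and which does not affect the conclusion.
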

\begin{proof} First, we note that conditioned on the event that $\epsilon _k = 0$, \begin{align*}
    \text{sign} \left( \sigma(\inner{\xb, \ab_k}) - \inner{\xb_k, \ab_k} \right)\mathbbm{1}_{ \{\inner{\xb_k, \ab_k} \ge 0\} } &= \text{sign} \left( \inner{\xb, \ab_k} - \inner{\xb_k, \ab_k} \right)\mathbbm{1}_{ \{\inner{\xb_k, \ab_k} \ge 0\} } \\ &= \text{sign} \left( \inner{\ub_k, \ab_k} \right)\mathbbm{1}_{ \{\inner{\xb_k, \ab_k} \ge 0\} }.
\end{align*}
Indeed, if $\inner{\xb, \ab_k}  < 0$ then both expressions under the sign are negative, and if $\inner{\xb, \ab_k}  \ge 0$ then both expressions are the same. We have the statement of the lemma by the law of total probability. 
\end{proof}

\subsection{SGD drift estimates and proof of convergence theorem for ReLU regression}

We have the following result similar to Lemma~\ref{lem:main_lemma_1}:
\begin{lem} 
\label{lem:main_lemma_ReLU_1}
Let the random process $\hat Y_k$ be defined as per \eqref{yk-relu}, where $\{ {\ab}_j \}_{j=1}^{n}$ are i.i.d. random vectors in $\R^d$ satisfying Assumptions \ref{assumption:measurement_model} and \ref{assumption:measurement_model_ReLU} with some constant $\tilde{C} > 0$ and the corruption noise $\epsilon _j$ follows the Massart model \ref{m-noise}. Let $\mathcal{F}_k$ be the $\sigma$-algebra generated by \\
$\{{\ab}_0, \epsilon _0\}, \{{\ab}_1, \epsilon _1\}, \dots, \{{\ab}_{k-1}, \epsilon _{k-1}\}$. Then for any step size decay parameter $\lambda > 0$ that satisfies
\begin{equation}\label{lam-bound-ReLU}
1 < \lambda^2 \le  1 +   \widetilde{C}^2{(1-2p)^2 \over 49d} <  \frac{50}{49},
\end{equation}
for $a = \frac{1}{2(\lambda^2 -1)}, \;
b = \frac{3}{2(\lambda^2 - 1)}$ and
$$
\eta = c^*\sqrt{\lambda^2-1};  \quad  c^* = {1 \over 8 \lambda^2}  \left[ {  \lambda^2 (1-2p) \widetilde{C}  \over \sqrt{2} \sqrt{d}} - \sqrt{\lambda^2-1}({3 \over 2}  + {\lambda^2 \over 2}  ) \right]
$$
we have
    \begin{align*}
&\mathbb{E} [e^{\eta (Y_{k+1} - Y_k )} \mathbbm{1}_{ \{a \le Y_k < b\} } | \mathcal{F}_k] \le \left( 1 - {\widetilde{C}^2(1-2p)^2  \over 100 d } \right) %\textcolor{orange}{\mathbbm{1}_{ \{a < Y_k < b\}} }.
\end{align*}

\end{lem}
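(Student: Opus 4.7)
My plan is to mirror the argument of Lemma~\ref{lem:main_lemma_1} and carefully propagate the ReLU indicator $I_k := \mathbbm{1}_{\{\inner{\xb_k, \ab_k} \ge 0\}}$ through the computation. Write $\Delta_k := \hat Y_{k+1} - \hat Y_k$ and expand
\[
\E[e^{\eta \Delta_k} \mathbbm{1}_{\{a \le \hat Y_k < b\}} | \mathcal{F}_k] \le 1 + \eta \E[\Delta_k \mathbbm{1}_{\{a \le \hat Y_k < b\}} | \mathcal{F}_k] + \sum_{n=2}^\infty \frac{\eta^n}{n!} \E\bigl[|\Delta_k|^n \mathbbm{1}_{\{a \le \hat Y_k < b\}} \big| \mathcal{F}_k\bigr].
\]
Using \eqref{yk-relu} I would then bound the linear and higher-order pieces separately, exploiting that $\ub_k$ and $\hat Y_k$ are $\mathcal{F}_k$-measurable while $(\ab_k, \epsilon_k)$ is independent of $\mathcal{F}_k$.

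For the linear term, the quadratic part contributes at most $(\lambda^2 - 1) b = 3/2$ on $\{\hat Y_k < b\}$, and the first identity in Assumption~\ref{assumption:measurement_model_ReLU} (with $\vb = \xb_k$) gives $\E[I_k | \mathcal{F}_k] = 1/2$, so the summand $\lambda^2 I_k$ contributes at most $\lambda^2/2$. For the cross term $-2\lambda^2 \inner{\ub_k, \ab_k} \text{sign}(\cdots) I_k$, I would apply Lemma~\ref{lem:noiseless_sign_reduction}: on the no-corruption event (probability $\ge 1-p$), the factor $I_k = 1$ forces $\sigma(\inner{\xb_k, \ab_k}) = \inner{\xb_k, \ab_k}$, so the sign expression collapses to $\text{sign}(\inner{\ub_k, \ab_k})$; on the corruption event, adopt the worst case (sign flip), yielding
\[
-\E[\inner{\ub_k, \ab_k}\, \text{sign}(\cdots) I_k | \mathcal{F}_k] \le -(1-2p)\, \E[|\inner{\ub_k, \ab_k}| I_k | \mathcal{F}_k].
\]
Combining the second identity of Assumption~\ref{assumption:measurement_model_ReLU} (with $\ub = \ub_k$, $\vb = \xb_k$) with Assumption~\ref{assumption:measurement_model} then gives $\E[|\inner{\ub_k, \ab_k}| I_k | \mathcal{F}_k] = \tfrac12 \E[|\inner{\ub_k, \ab_k}| | \mathcal{F}_k] \ge \widetilde{C} \|\ub_k\|/(2\sqrt{d})$, and on $\{\hat Y_k \ge a\}$ one has $\|\ub_k\| \ge 1/\sqrt{2(\lambda^2 - 1)}$. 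Collecting yields the drift bound
\[
\E[\Delta_k \mathbbm{1}_{\{a \le \hat Y_k < b\}} | \mathcal{F}_k] \le \Bigl(\tfrac{3}{2} + \tfrac{\lambda^2}{2}\Bigr) - \frac{\lambda^2(1-2p)\widetilde{C}}{\sqrt{2 d (\lambda^2 - 1)}}.
\]

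For the higher-order terms, the pointwise bound $|\Delta_k| \le \tfrac{3}{2} + \lambda^2 + 2\lambda^2 \sqrt{3}/\sqrt{2(\lambda^2-1)}$ on $\{\hat Y_k < b\}$ is unchanged from the proof of Lemma~\ref{lem:main_lemma_1} (the indicators $I_k$ can only shrink $|\Delta_k|$), so choosing $\eta = c\sqrt{\lambda^2-1}$ and repeating the geometric summation there gives $\sum_{n \ge 2} \eta^n |\Delta_k|^n / n! \le 4\lambda^4 c^2$ once $c$ is small enough to keep the common ratio below $1/2$. The full estimate becomes $1 - u c + 4\lambda^4 c^2$ with $u$ as in the lemma statement; minimizing in $c$ recovers $c^* = u/(8\lambda^4)$, and hence the $c^*$ of the lemma up to rearrangement. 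The tighter condition $\sqrt{\lambda^2 - 1} \le \widetilde{C}(1-2p)/(7\sqrt{d})$ forces the negative term in $u$ to be dominated by the positive one, producing $u \gtrsim \widetilde{C}(1-2p)/\sqrt{d}$ and thus $4\lambda^4(c^*)^2 \ge \widetilde{C}^2(1-2p)^2/(100d)$, matching the claim.

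The main obstacle is the cross-term analysis in the second step: unlike in the linear case, the sign inside the update, $\text{sign}(\sigma(\inner{\xb, \ab_k}) - \inner{\xb_k, \ab_k} + \epsilon_k/G)$, is \emph{not} directly the sign of $\inner{\ub_k, \ab_k}$ because of ReLU truncation. Lemma~\ref{lem:noiseless_sign_reduction} is the precise tool that removes this nuisance on the no-corruption event by using $I_k = 1$, and Assumption~\ref{assumption:measurement_model_ReLU} supplies the factor of exactly $1/2$ in the truncated expectation. Together these produce the slightly worse constants ($1/49$ in place of $1/9$, $1/100$ in place of $1/60$) that distinguish the ReLU statement from its linear counterpart.
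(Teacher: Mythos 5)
Your proposal follows essentially the same route as the paper's proof: the same Taylor expansion of the moment generating function, the same use of Lemma~\ref{lem:noiseless_sign_reduction} and Assumption~\ref{assumption:measurement_model_ReLU} to reduce the truncated cross term to $\tfrac12(1-2p)\,\E[|\inner{\ub_k,\ab_k}|]$, the same geometric-series bound on the higher-order terms, and the same minimization over $c$ yielding $c^* = u/(8\lambda^4)$ and the $1 - \widetilde{C}^2(1-2p)^2/(100d)$ bound. The argument is correct as outlined.
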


\begin{proof}
\small
\begin{align*}
&\mathbb{E} [ (Y_{k+1} - Y_k) \mathbbm{1}_{ \{a \le Y_k < b\} } | \mathcal{F}_k ] \\ 
& = \mathbb{E} \Big [ \left(\|\ub_{k+1}\|^2 - \|\ub_k\|^2\right) \mathbbm{1}_{ \{a \le Y_k < b\} } \Big| \mathcal{F}_k \Big ] \\
& = \mathbb{E} \Big [ \left( (\lambda^2-1)\|\ub_k\|^2 - \lambda^2 \left(2 \inner{\ub_k, \ab_k} \text{sign} \left( \sigma(\inner{\xb, \ab_k}) - \inner{\xb_k, \ab_k}  + \epsilon_k/G \right)  - 1 \right) \mathbbm{1}_{ \{\inner{\xb_k, \ab_k} \ge 0\} } \right) \mathbbm{1}_{ \{a \le \|\ub_k\|^2 < b\} } \Big|  \mathcal{F}_k  \Big ] \\
&  \stackrel{(i)}{\le}  \left({3 \over 2}  + {\lambda^2 \mathbb{E}_{A} \bigg[  \mathbbm{1}_{ \{\inner{\xb_k, \ab_k} \ge 0\} } \Big|  \ub_k \bigg]} \right) \mathbbm{1}_{ \{a \le \|\ub_k\|^2 < b\} }  - 2 \lambda^2  \mathbb{E}_{A} \bigg[ (1 - 2p) | \inner{\ub_k, \ab_k}| \mathbbm{1}_{ \{\inner{\xb_k, \ab_k} \ge 0\}  } \Big|  \ub_k \bigg]\mathbbm{1}_{ \{a \le \|\ub_k\|^2 < b\}} \\
&\stackrel{(ii)} {=} \left({3 \over 2}  + {\lambda^2 \over 2} \right) -  \lambda^2 (1 - 2p) \mathbbm{1}_{ \{a \le \|\ub_k\|^2 < b\} } \mathbb{E}_{A} \bigg[  | \inner{\ub_k, \ab_k}|  \Big|  \ub_k \bigg]\\
&\le  \left({3 \over 2}  + {\lambda^2 \over 2} \right) - {  \widetilde{C} (1-2p) \lambda^2 \over  \sqrt{d}\sqrt{2(\lambda^2-1)}} \mathbbm{1}_{ \{a \le Y_k < b\} }.
\end{align*}
\normalsize

Here, the step (i) requires the assumption on the size of $a$ and Lemma \ref{lem:noiseless_sign_reduction} ,  the step (ii) requires Assumption \ref{assumption:measurement_model_ReLU}, and finally we use Assumption~\ref{assumption:measurement_model} in the last step. 
Note the similarity with equation \eqref{first-term}. We proceed in parallel with the proof of Lemma~\ref{lem:main_lemma_1} to get using Taylor expansion
\begin{align*}
\mathbb{E} [e^{c \sqrt{\lambda^2-1} (Y_{k+1} - Y_k )} \mathbbm{1}_{ \{a \le Y_k < b\} } | \mathcal{F}_k] &
 \le 1 - u c  
+ 4 \lambda^4 c^2, \\
& \text{ where } \quad 
u =  {\lambda^2 (1-2p) \widetilde{C}  \over  \sqrt{2}\sqrt{d}} - \sqrt{\lambda^2-1}({3 \over 2}  + {\lambda^2 \over 2}  ).
\end{align*}
From the condition $1 < \lambda^2 < 1 +  \widetilde{C}^2{(1-2p)^2 \over 49d}$  we can take
\begin{equation*}c^{*} = {u \over 8 \lambda^4}  > {(1/\sqrt{2} - 3/14 - 1/14) \over 8 \lambda^2}   {   \widetilde{C} (1-2p)  \over \sqrt{d}} > {1 \over 20} \bigg[  { \widetilde{C} (1-2p) \over \sqrt{d}}   \bigg] \end{equation*}
This gives
\begin{align*}
&\mathbb{E} [e^{c^*\sqrt{\lambda^2 -1} (Y_{k+1} - Y_k )} \mathbbm{1}_{ \{a \le Y_k < b\} } |  \mathcal{F}_k] \le 1 - 4 \lambda^4 (c^*)^2 \le  1 - {\widetilde{C}^2(1-2p)^2  \over 100 d }.% \mathbbm{1}_{ \{a < Y_k < b\} }.
\end{align*}
\end{proof}

With the proof directly following the proof of Lemma \ref{lem:main_lemma_2}, we also get:
\begin{lem} 
\label{lem:main_lemma_ReLU_2} Let the random process $Y_k$ be defined as per \eqref{yk-def}, where $\{ {\ab}_j \}_{j=1}^{n}$ are i.i.d. random vectors in $\R^d$ satisfying Assumption 1 with some constant $\tilde{C} > 0$ and the corruption noise $\epsilon _j$ is non-zero with probability $p < 1/2$. Let $\mathcal{F}_k$ be the $\sigma$-algebra generated by \\
$\{{\ab}_0, \epsilon _0\}, \{{\ab}_1, \epsilon _1\}, \dots, \{{\ab}_{k-1}, \epsilon _{k-1}\}$. Then, for any step size decay parameter $\lambda > 0$ that satisfies 
\begin{equation*}
1 < \lambda^2 \le  1 +   \widetilde{C}^2{(1-2p)^2 \over 49d} <  \frac{50}{49},
\end{equation*}
for 
$$
a = \frac{1}{2(\lambda^2 -1)}; \quad \eta = c^*\sqrt{\lambda^2-1};  \quad  c^* = {1 \over 8 \lambda^2}  \left[ {  \lambda^2 (1-2p) \widetilde{C}  \over \sqrt{2} \sqrt{d}} - \sqrt{\lambda^2-1}({3 \over 2}  + {\lambda^2 \over 2}  ) \right]
$$
we have
\begin{align*}
&\mathbb{E} [e^{\eta (Y_{k+1} - a )} \mathbbm{1}_{ \{ Y_k < a\}} |  \mathcal{F}_k] 
\le \exp\Big \{ {\widetilde{C} (1-2p) \over 6 \sqrt{d}}  \Big \}  .
\end{align*}

\end{lem}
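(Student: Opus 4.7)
The plan is to follow the template of Lemma~\ref{lem:main_lemma_2} almost verbatim, since the only structural change between the linear and ReLU recursions (in this particular pointwise bound) is the appearance of the indicator $\mathbbm{1}_{\{\inner{\xb_k, \ab_k} \ge 0\}}$, which can only decrease the contribution of the second and third summands. First I would use that $\eta>0$ and $Y_k \le a$ imply $e^{\eta(Y_{k+1}-a)} \le e^{\eta(Y_{k+1}-Y_k)}$, so it suffices to bound $\mathbb{E}[e^{\eta(Y_{k+1}-Y_k)} \mathbbm{1}_{\{Y_k < a\}} \mid \mathcal{F}_k]$.

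Next, starting from the explicit form \eqref{yk-relu} of $Y_{k+1}$, I would derive the deterministic pointwise bound
\[
Y_{k+1} - Y_k \le (\lambda^2 - 1)\|\ub_k\|^2 + 2\lambda^2 |\inner{\ub_k, \ab_k}| + \lambda^2,
\]
dropping both indicators on $\{\inner{\xb_k,\ab_k}\ge 0\}$ (which is legal as they are bounded by $1$ and only reduce the positive-sign contribution). On the event $\{Y_k \le a\}$ with $a = \tfrac{1}{2(\lambda^2-1)}$, Cauchy--Schwarz and $\|\ab_k\|=1$ give $(\lambda^2-1)\|\ub_k\|^2 \le 1/2$ and $|\inner{\ub_k, \ab_k}| \le \sqrt{a} = 1/\sqrt{2(\lambda^2-1)}$, and hence the exponent is uniformly bounded:
\[
\eta(Y_{k+1}-Y_k) \le c^*\sqrt{\lambda^2-1}\left(\tfrac{1}{2} + \lambda^2\right) + c^*\sqrt{2}\,\lambda^2.
\]
At this point the bound no longer depends on any randomness, so the conditional expectation of the exponential is simply the exponential of this quantity.

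Finally, I would substitute the upper bound $c^* < \widetilde{C}(1-2p)/(8\sqrt{2}\sqrt{d})$ (read directly off the defining formula for $c^*$ in the lemma statement) and use the assumed range $1<\lambda^2 \le 50/49$, so that $\sqrt{\lambda^2-1}\le 1/7$. The dominant term $c^*\sqrt{2}\lambda^2$ contributes at most $\tfrac{(1-2p)\widetilde{C}}{8\sqrt{d}}\cdot\tfrac{50}{49}$, while the $c^*\sqrt{\lambda^2-1}(\tfrac{1}{2}+\lambda^2)$ term is of smaller order in $1/\sqrt{d}$, and a quick numerical check shows the sum is $\le \widetilde{C}(1-2p)/(6\sqrt{d})$. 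I do not anticipate any real obstacle here; the argument is essentially a transcription of Lemma~\ref{lem:main_lemma_2} with the extra indicator disposed of by monotonicity, and the only care needed is constant-tracking to land on the $1/6$ factor advertised in the statement (as opposed to the $1/3$ in the linear case, which reflects the slightly different choice of $c^*$ enforced by the $\sqrt{2}$ appearing in Assumption~\ref{assumption:measurement_model_ReLU}).
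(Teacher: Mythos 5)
Your proposal is correct and is exactly the paper's route: the paper itself gives no separate argument for Lemma~\ref{lem:main_lemma_ReLU_2}, stating only that the proof ``directly follows'' that of Lemma~\ref{lem:main_lemma_2}, and your transcription --- drop the indicator $\mathbbm{1}_{\{\inner{\xb_k,\ab_k}\ge 0\}}$ by monotonicity, bound $e^{\eta(Y_{k+1}-a)}\le e^{\eta(Y_{k+1}-Y_k)}$ on $\{Y_k< a\}$, use the deterministic bound on the increment together with $\|\ub_k\|^2\le a$, and substitute $c^*<\widetilde{C}(1-2p)/(8\sqrt{2}\sqrt{d})$ --- is precisely what that entails. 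Your constant tracking also checks out: $c^*\sqrt{2}\lambda^2\le \tfrac{50}{392}\cdot\widetilde{C}(1-2p)/\sqrt{d}$ plus the lower-order term stays below the advertised $\widetilde{C}(1-2p)/(6\sqrt{d})$.
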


\begin{proof}[Proof of Theorem \ref{thm:main_convergence_relu}] This is a direct application of the estimate from Corollary~\ref{cor:stopping_time_bound} with the constants $K, D$ and $\rho$ as obtained by Lemma~\ref{lem:main_lemma_ReLU_1} and Lemma~\ref{lem:main_lemma_ReLU_2}.
\end{proof}

\begin{rem} (Random symmetric oblivious corruption for ReLU regression) 
\label{rem:symmetric oblivious corruption_ReLU}
For the random symmetric oblivious noise, suppose that the $k$-th response is selected for corruption. The probability that the sign of $\epsilon _k$ differs from
the sign of $\sigma(\inner{\xb, \ab_k}) - \inner{\xb_k, \ab_k}$ is at most $1/2$. Thus, the sign of  $\sigma(\inner{\xb, \ab_k}) - \inner{\xb_k, \ab_k} + \epsilon_k/G$ differs from the sign of $\sigma(\inner{\xb, \ab_k}) - \inner{\xb_k, \ab_k}$ with probability at most $1/2$ since $G > 0$.  Since the probability of $k$-th response being selected for corruption is $p$, the overall probability that 
$\text{sign} (\sigma(\inner{\xb, \ab_k}) - \inner{\xb_k, \ab_k} )  \neq \text{sign} (\sigma(\inner{\xb, \ab_k}) - \inner{\xb_k, \ab_k} +  \epsilon_k/G)$ is $p/2$. 
Hence, in this case, the inequality $(i)$ in the proof of Lemma \ref{lem:main_lemma_ReLU_1} is refined to 
\small
\begin{align*}
 \mathbb{E} &\Big [ \left( (\lambda^2-1)\|\ub_k\|^2 - \lambda^2 \left(2 \inner{\ub_k, \ab_k} \text{sign} \left( \sigma(\inner{\xb, \ab_k}) - \inner{\xb_k, \ab_k}  + \epsilon_k/G \right)  - 1 \right) \mathbbm{1}_{ \{\inner{\xb_k, \ab_k} \ge 0\} } \right) \mathbbm{1}_{ \{a \le \|\ub_k\|^2 < b\} } \Big|  \mathcal{F}_k  \Big ] \\
&  \stackrel{(i)}{\le}  \left({3 \over 2}  + {\lambda^2 \mathbb{E}_{A} \bigg[  \mathbbm{1}_{ \{\inner{\xb_k, \ab_k} \ge 0\} } \Big|  \ub_k \bigg]} \right) \mathbbm{1}_{ \{a \le \|\ub_k\|^2 < b\} }  \\
&\quad\quad\quad\quad\quad\quad\quad\quad - 2 \lambda^2  \mathbb{E}_{A} \bigg[ (1 - p) | \inner{\ub_k, \ab_k}| \mathbbm{1}_{ \{\inner{\xb_k, \ab_k} \ge 0\}  } \Big|  \ub_k \bigg]\mathbbm{1}_{ \{a \le \|\ub_k\|^2 < b\}}.
\end{align*}
\normalsize
Now, the rest of the proof of Lemma \ref{lem:main_lemma_ReLU_1} and Theorem~\ref{thm:main_convergence_relu} remain the same, except we have the factor $(1-p)$ instead of $(1-2p)$ in Theorem \ref{thm:main_convergence}. This allows the recovery of the signal for SGD-exp for any corruption probability $p < 1$ under the random symmetric oblivious corruption model for ReLU regression, same as in the linear case. 
\end{rem}

\begin{rem}
One advantage of SGD-exp is its rapid convergence rate for Massart noise in the streaming setting. Since our work has appeared, Massart noise in the streaming setting has garnered attention from other areas, such as bandit optimization in streaming settings \cite{diakonikolas2024online}. However, this aggressive step-size decay in SGD-exp may not be optimal under Gaussian noise, where a slower decay could eventually lead to better convergence error in the long run.  Nevertheless, in practical deep learning, exponential step size schedules are still widely used and empirically successful. To achieve the best of both worlds, we aim to develop more sophisticated step-size scheduling schemes, such as hybrid strategies, that could handle both types of noise. This is a very recent direction of research, which has so far been explored only under non-adversarial noise settings, for example, in \cite{wang2023convergence}.
\end{rem}

\section{Experiments}\label{sec:experiments}
In this section, we report on numerical experiments designed to validate our theoretical findings and demonstrate the performance of SGD-exp.

    \subsection{Experiments on random systems}
    \label{section:experiments_synthetic_datasets}
In each trial of   experiments in Section \ref{section:experiments_synthetic_datasets}, we generate the measurement vectors $\ab_k$ and $\xb$ in $\R^d$ randomly according to the $d$-dimensional standard Gaussian random distribution. 
The effectiveness of the tested methods is evaluated by measuring the relative L2 error, ${\|\xb - \xb_k\|_2 \over \|\xb\|_2}$ in the number of iterations or data point access. We will further refer to it as relative error. 
    \subsubsection{SGD-exp vs SGD-root}

    \begin{figure}[h] 
	
	   \centering
        \includegraphics[width=0.45 \textwidth]{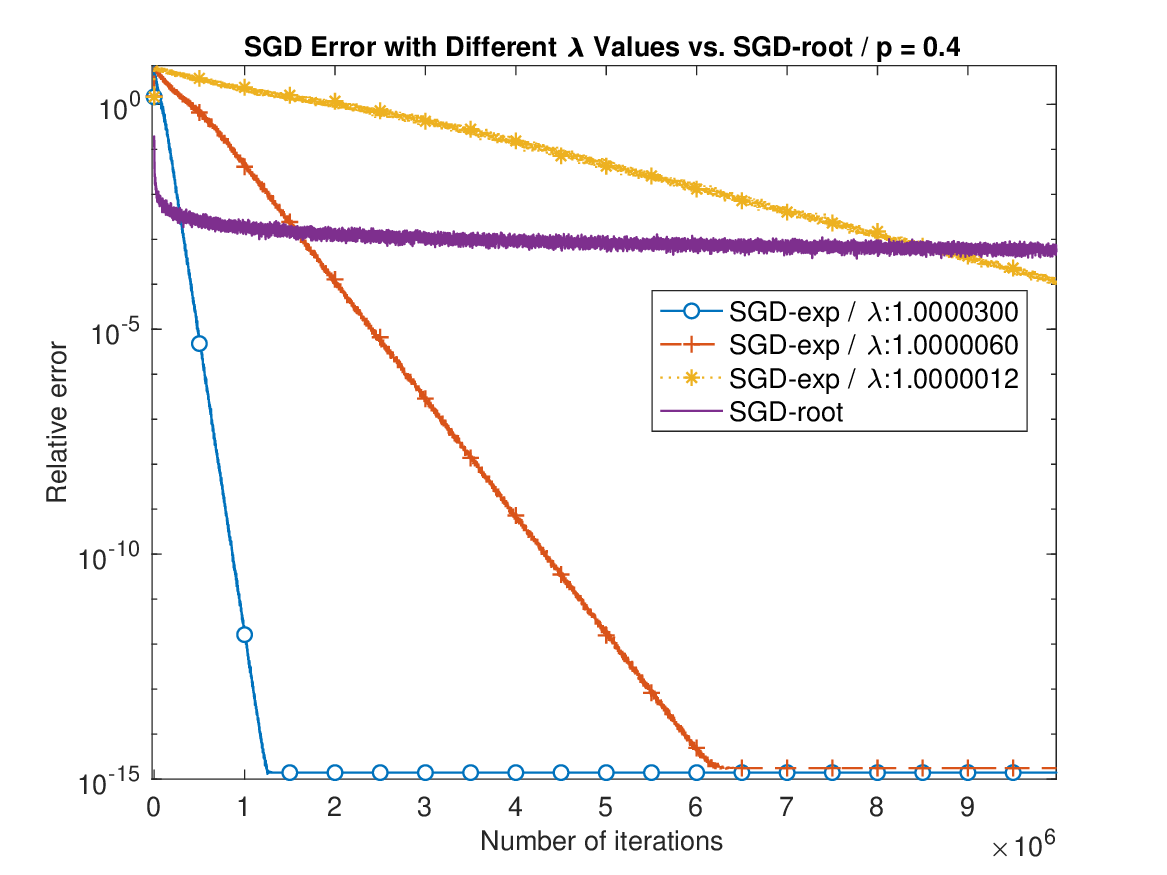}
		\caption[] {Linear regression with SGD-exp and SGD-root on streaming Gaussian data with corruption probability $p = 0.4$ (sign-flip corruptions).  For SGD-exp, step size scales $\lambda = 1.00003$, $\lambda = 1.000006$, $\lambda = 1.0000012$. Larger $\lambda$ results in faster convergence but even very small $\lambda$ are more efficient than SGD-root in the long run. The plots are averaged over $20$ runs.
}
        \label{fig:SGD-streaming}
    \end{figure}
    The semilog plots in Figure \ref{fig:SGD-streaming} imply that SGD-exp converges linearly for a smaller step size decay rate $\lambda$, whereas SGD-root only provides a sublinear convergence. Here, the measurement vectors are the $100$-dimensional normalized Gaussian vectors, and the relative errors are averaged over $20$ trials.
  
    \subsubsection{SGD-exp with various values of $p$ and step sizes for sign-flip corruption}
    Figure \ref{fig:SGD-exp_streaming_various_p_step_sizes} displays the relative error plots of SGD-exp for robust linear regression with various choices of step sizes and different corruption probabilities $p$. The measurement vectors are the $100$-dimensional normalized Gaussian vectors. 
    
    The corruption model is the sign corruption, in other words, the $i$-th measurement $y_i$ is replaced with $- y_i$ with probability $p$. This is a typical example of the Massart noise since it requires the knowledge of the sign of $\inner{\xb, \ab_j}$. 
    We have selected this model because, in random symmetric oblivious corruption models that involve adding large random errors, SGD-exp still converges to the true parameter even when  $p > 1/2$ as predicted by our theory and demonstrated by the experiments in the following subsection.

    The plots in Figure \ref{fig:SGD-exp_streaming_various_p_step_sizes} indicate that as the step size decay factor $\lambda$ gets close to $1$, SGD-exp converges for higher values of the corruption probability $p$ at the expense of the convergence rates. The right plot confirms our theory that SGD-exp still converges when the corruption probability is close to $0.5$ for the Massart noise type corruption. 

    \begin{figure}[h] 
	
			\centering
     \includegraphics[width=0.45 \textwidth]{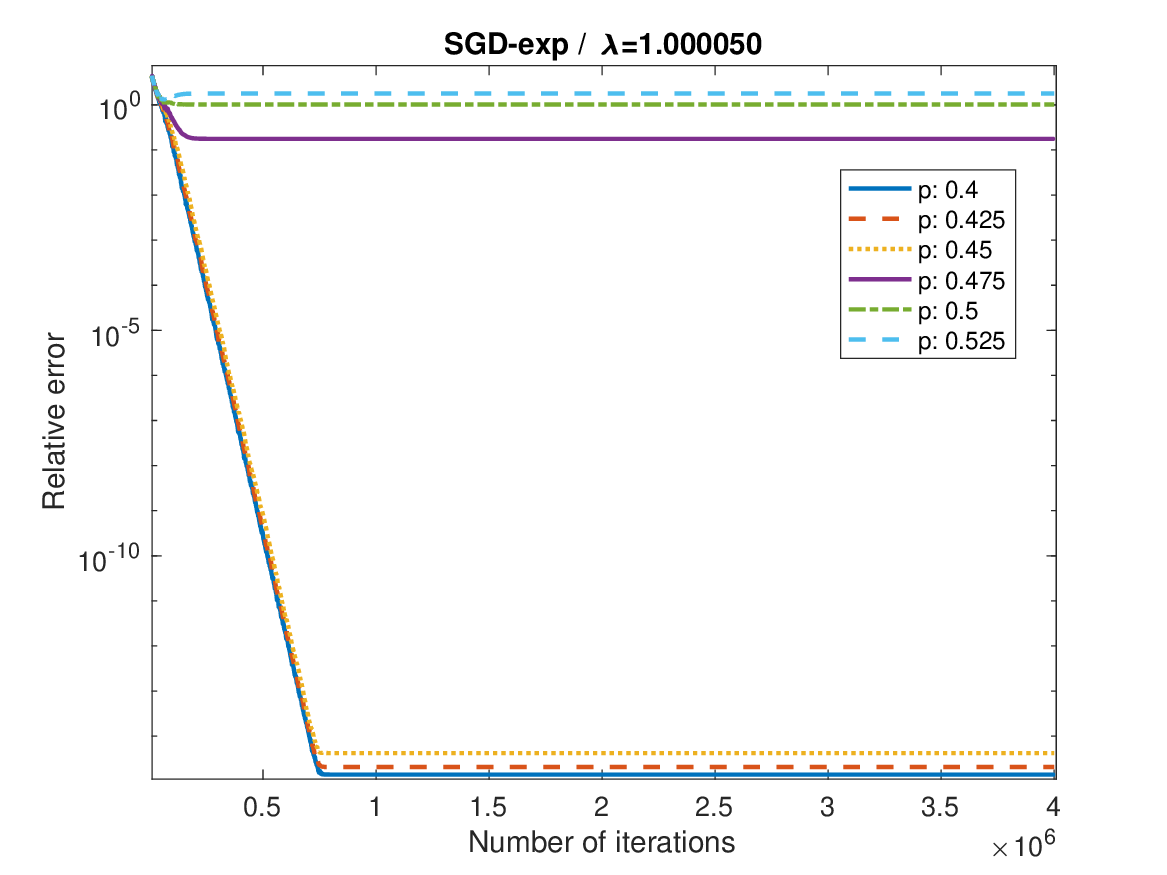}
     \includegraphics[width=0.45 \textwidth]{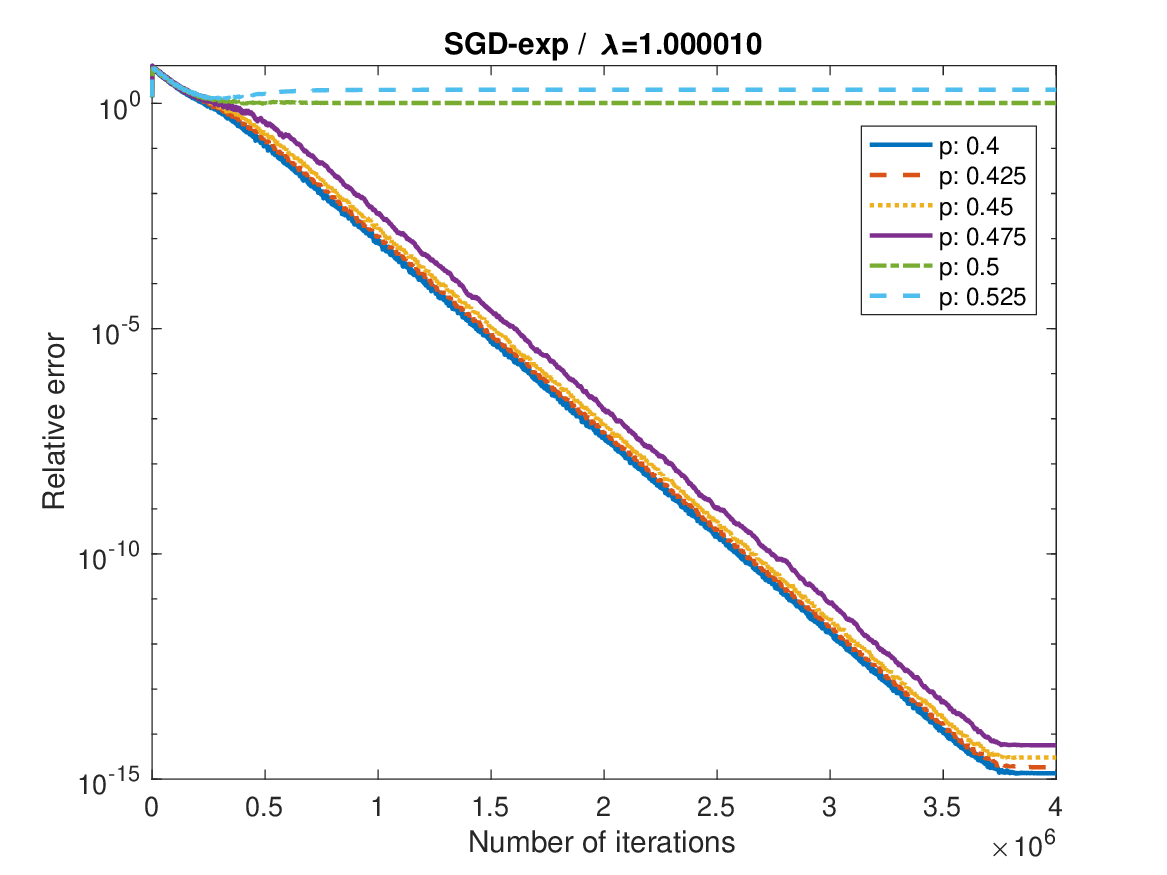}

	\caption[] {Linear regression with SGD-exp on streaming Gaussian data with different values of the corruption probability $p$ (sign-flip corruptions). With step size $\lambda = 1.00005$ (left)

     the error of SGD-exp converges linearly for  $p \le 0.45$. With more conservative step size  $\lambda = 1.00001$ (right) the error of SGD-exp converges linearly for  $p \le 0.475$. The plots are averaged over $10$ runs.
	}

    \label{fig:SGD-exp_streaming_various_p_step_sizes}
    \end{figure}

    \subsubsection{SGD-exp with various values of $p$ with recommended step size for the symmetric oblivious corruption and the sign-flip corruption}

    Figure \ref{fig:SGD-exp_Gaussian_noise} illustrates the error plots of SGD-exp for robust linear regression with several corruption probability $p$ and associated step size parameter $\lambda$. Here, the measurement vectors are the $50$-dimensional rescaled standard normal Gaussian vectors. 
    The response $y_i$ is corrupted by the random symmetric oblivious additive Gaussian noise with variance $30$. Following the guideline about the dependence of the parameter $\lambda$ on $T, d$, and $p$ as stated in Theorem \ref{thm:main_convergence}, we set $\lambda = \sqrt{1 + 2{(1-p)^2 \over {d \log^2 T}} }$ for various values of the corruption probability $p$.
    
    The error plot on the left in Figure \ref{fig:SGD-exp_Gaussian_noise} supports our theory that SGD-exp still converges to the solution for any corruption probability $p < 1$ under the random symmetric oblivious corruption model. Similarly, the relative error plot on the right in Figure \ref{fig:SGD-exp_Gaussian_noise} validates our theory that SGD-exp still converges to the solution for any corruption probability $p < 1/2$ under the Massart corruption noise. 

 \begin{figure}[h] 

	\centering

    \includegraphics[width=0.45\textwidth] {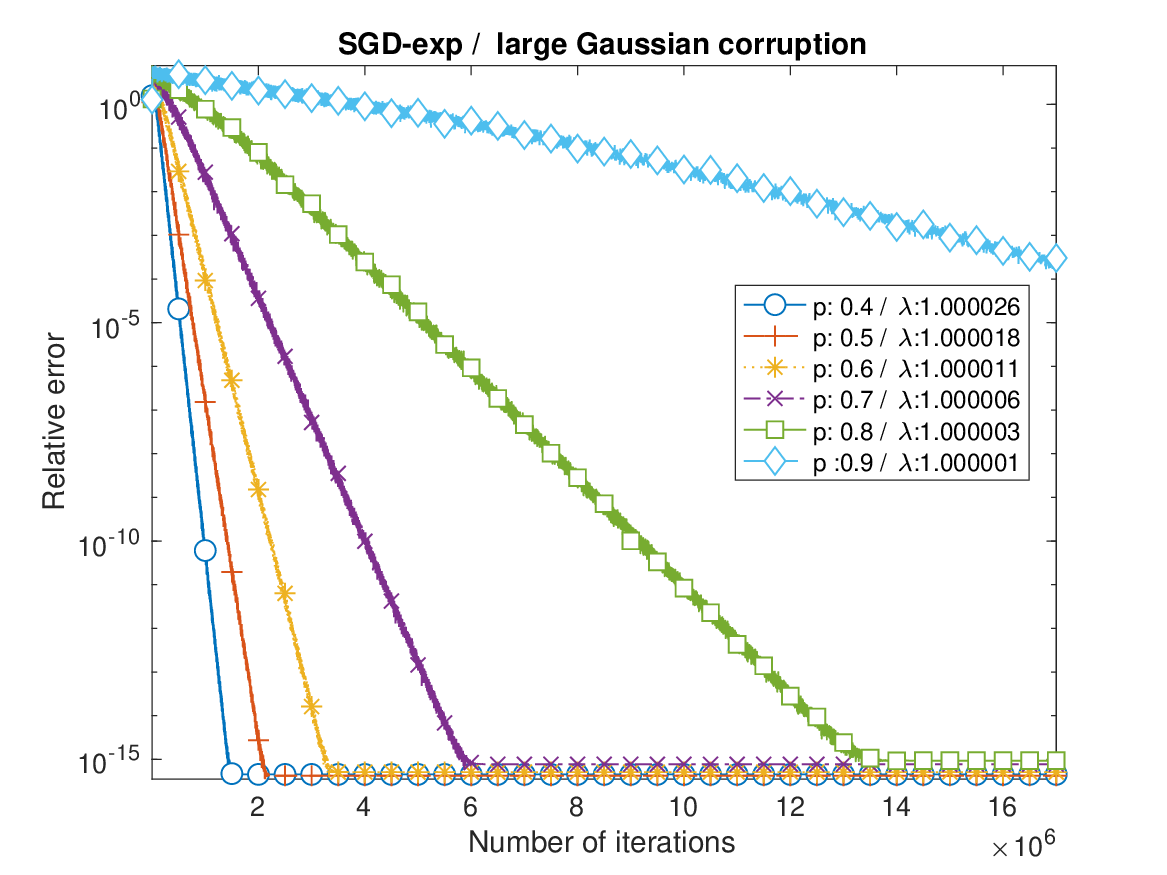}
     \includegraphics[width=0.45\textwidth] {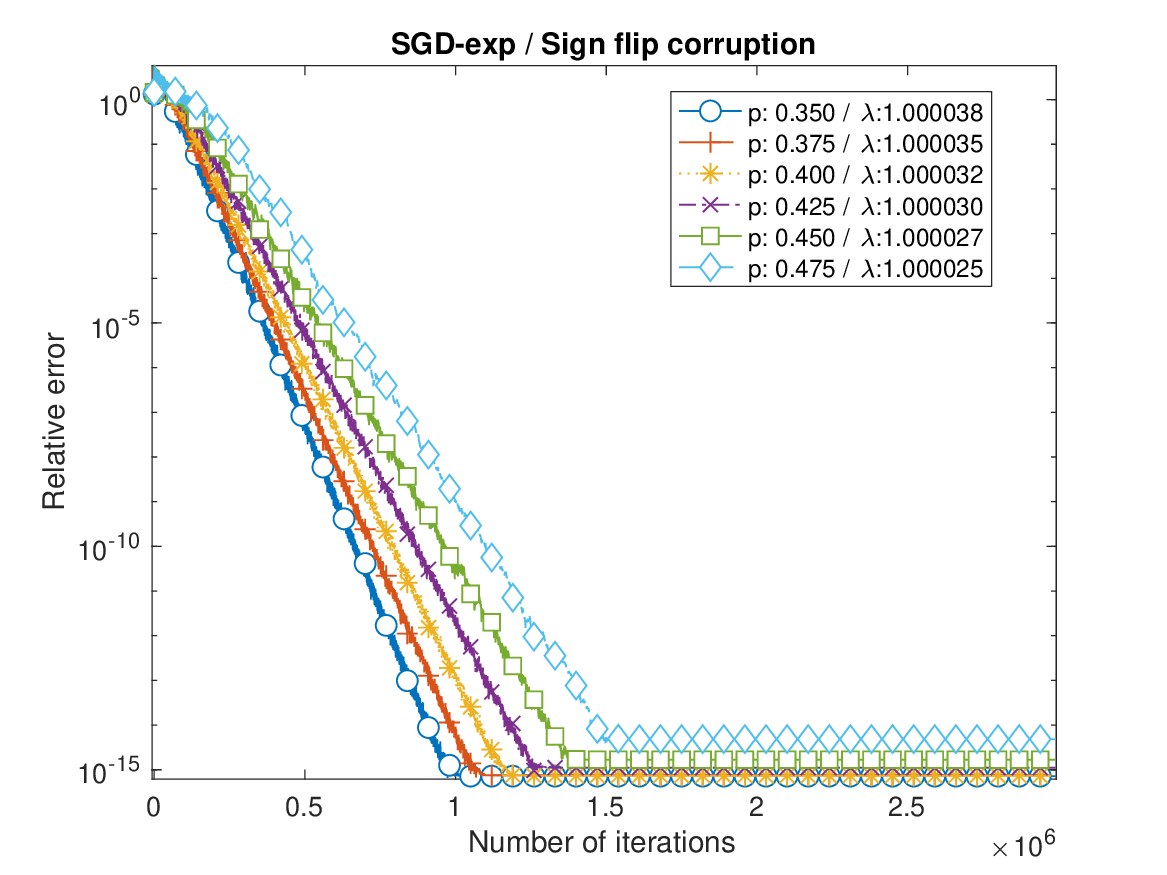}

		\caption[] {Linear regression with SGD-exp on streaming Gaussian data with step-size $\lambda$ recommended by Theorem~\ref{thm:main_convergence}.  The error of SGD-exp converges approximately linearly for $p \le 0.9$ for the symmetric random oblivious corruption model (left) for $p \le 0.475$ for the sign-flip error (right). The plots are averaged over $10$ runs.
		}

        \label{fig:SGD-exp_Gaussian_noise}
    \end{figure}

    \subsubsection{SGD-exp for robust ReLU regression}

    In this subsection, we illustrate the effectiveness of SGD-exp for robust ReLU regression. Our findings suggest that SGD-exp outperforms GLM-Tron \cite{kakade2011efficient,karmakar2022provable,wu2023finite}, the popular ReLU regression method. GLM-Tron is essentially a (stochastic) iterative method for ReLU regression based on the $\ell_2$ loss. Although works such as \cite{wu2023finite} have provided some robustness property of GLM-Tron, our findings indicate that it is not robust with respect to the Massart noise, which is our corruption model. In particular, as shown in Figure \ref{fig:Comparision_GLM_Tron_SGD-exp}, SGD-exp provides (nearly) linear convergence for robust ReLU regression under Massart noise, while stochastic GLM-Tron with constant, polynomial and exponential step size decays (which we denote for brevity as GLM-Tron-constant, GLM-Tron-root and GLM-Tron-exp and define precisely in the caption of Figure~\ref{fig:Comparision_GLM_Tron_SGD-exp}) all fail to converge. We note that on the same data without corruptions (when $p = 0$) the same GLM-Tron-exp and GLM-Tron-root methods converge successfully, achieving relative error of $10^{-15}$ and $10^{-2}$ respectively in $2\cdot 10^{5}$ iteration steps.
    
  \begin{figure}[h] 
	\centering
     \includegraphics[width=0.45\textwidth] {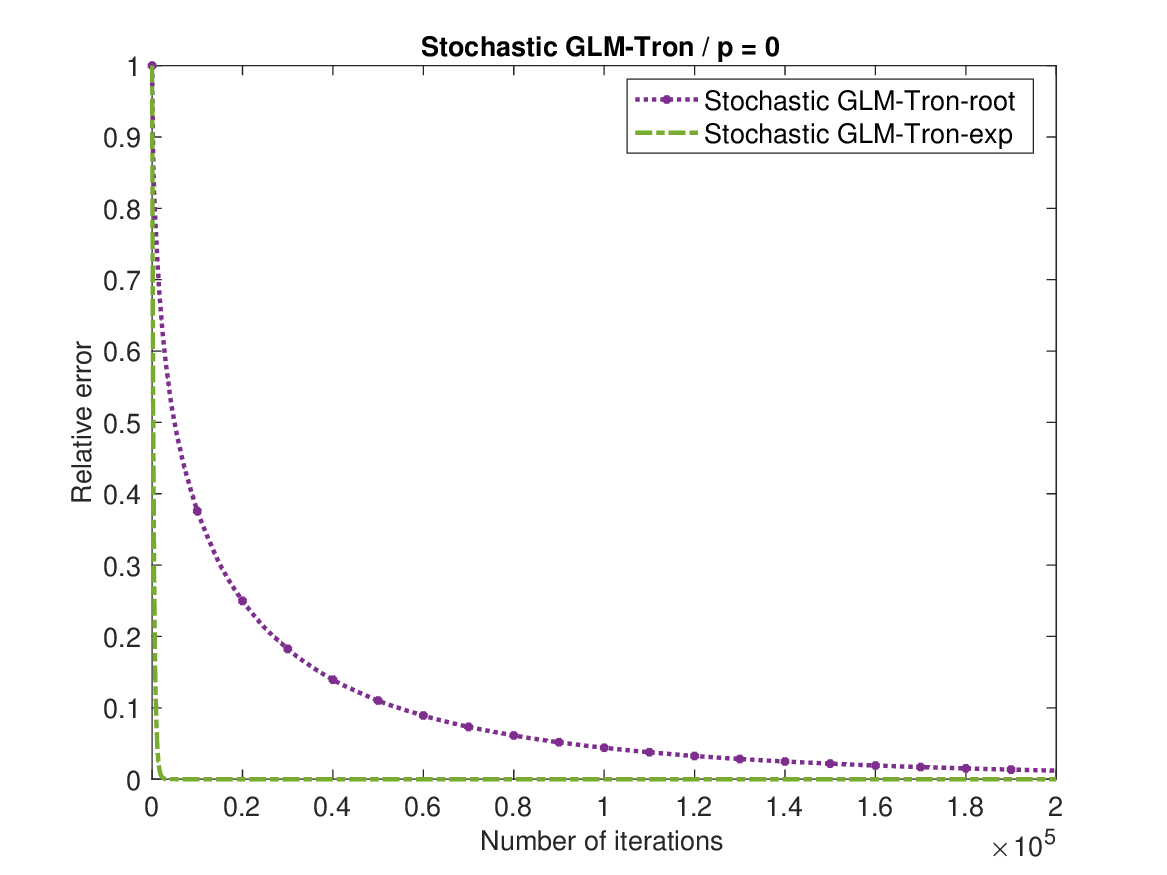}
     \includegraphics[width=0.45\textwidth] {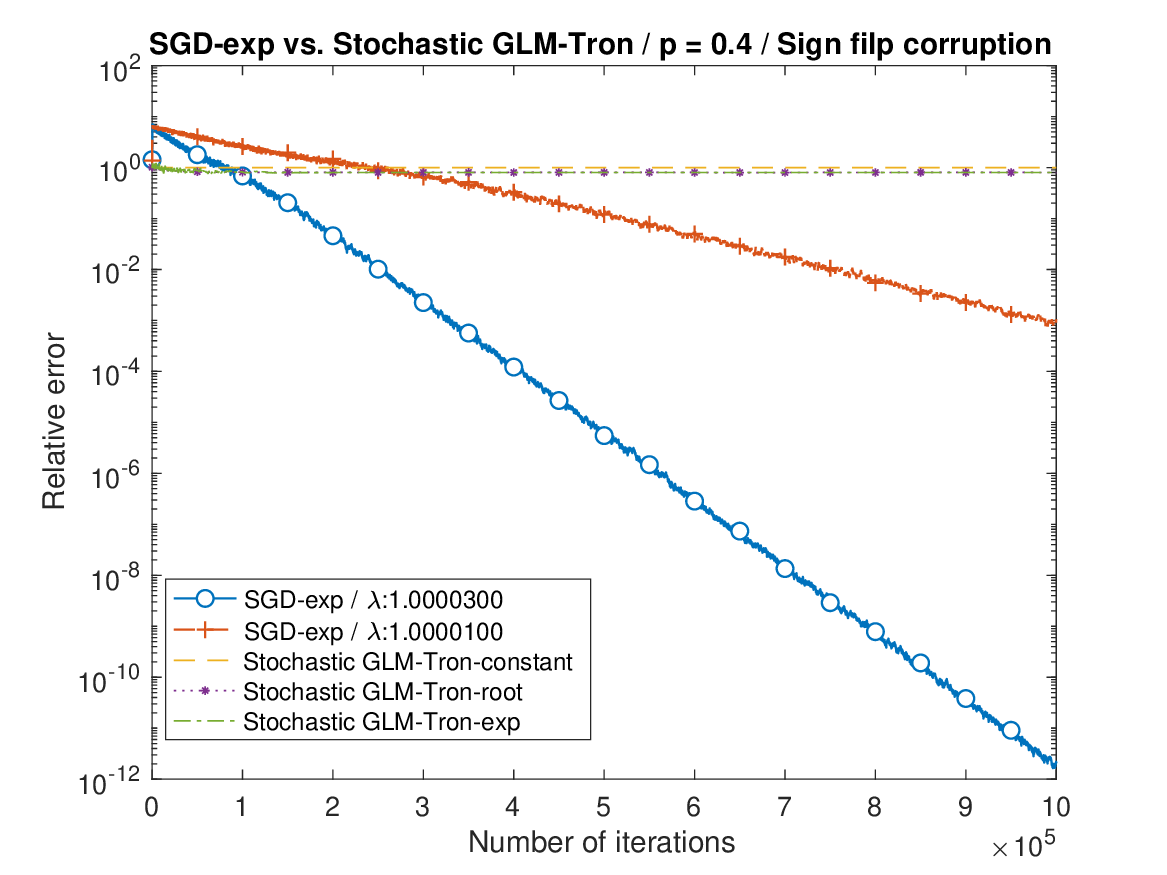}

		\caption[] {
        Stochastic GLM-Tron in the streaming setting with no corruption ($p=0$) offers recovery of the true parameter for ReLU regression with both square root decaying and exponentially decaying step size scheduling (left). The number of samples in the data set is denoted by $m$. Here, GLM-Tron-exp employs exponentially decaying step size $1.00003^{-k}/m$, GLM-Tron-const has step size $1/m$ and GLM-Tron-root has step size $k^{-1/2}/m$.  
        However, with sign corruption with $p = 0.4$  (right), stochastic GLM-Tron struggles with various step size choices (the error curves are similar). In contrast, the error of ReLU-SGD-exp converges linearly in robust ReLU regression with sign corruption. The plots are averages over $10$ runs. 
  } 
        \label{fig:Comparision_GLM_Tron_SGD-exp}
    \end{figure}
      \begin{figure}[h] 

	\centering
     \includegraphics[width=0.45\textwidth] {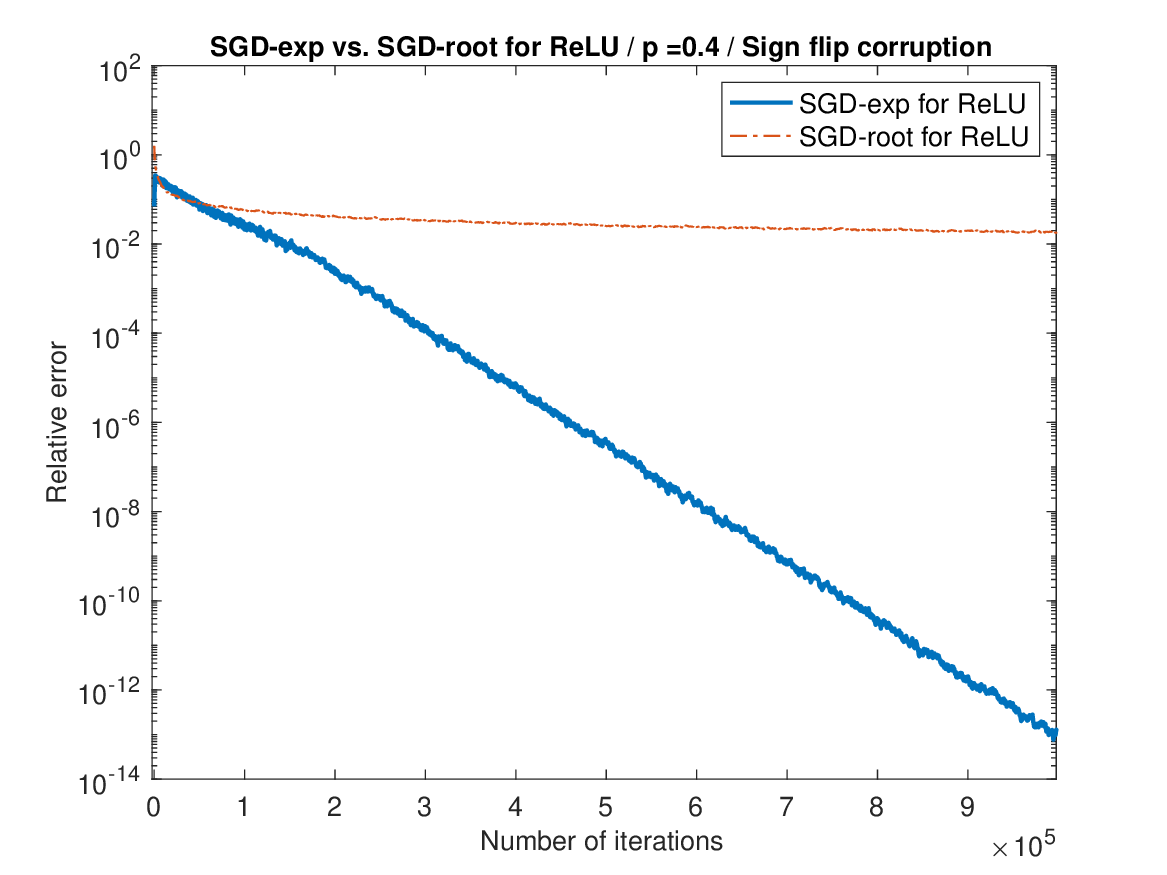}
     \includegraphics[width=0.45\textwidth] {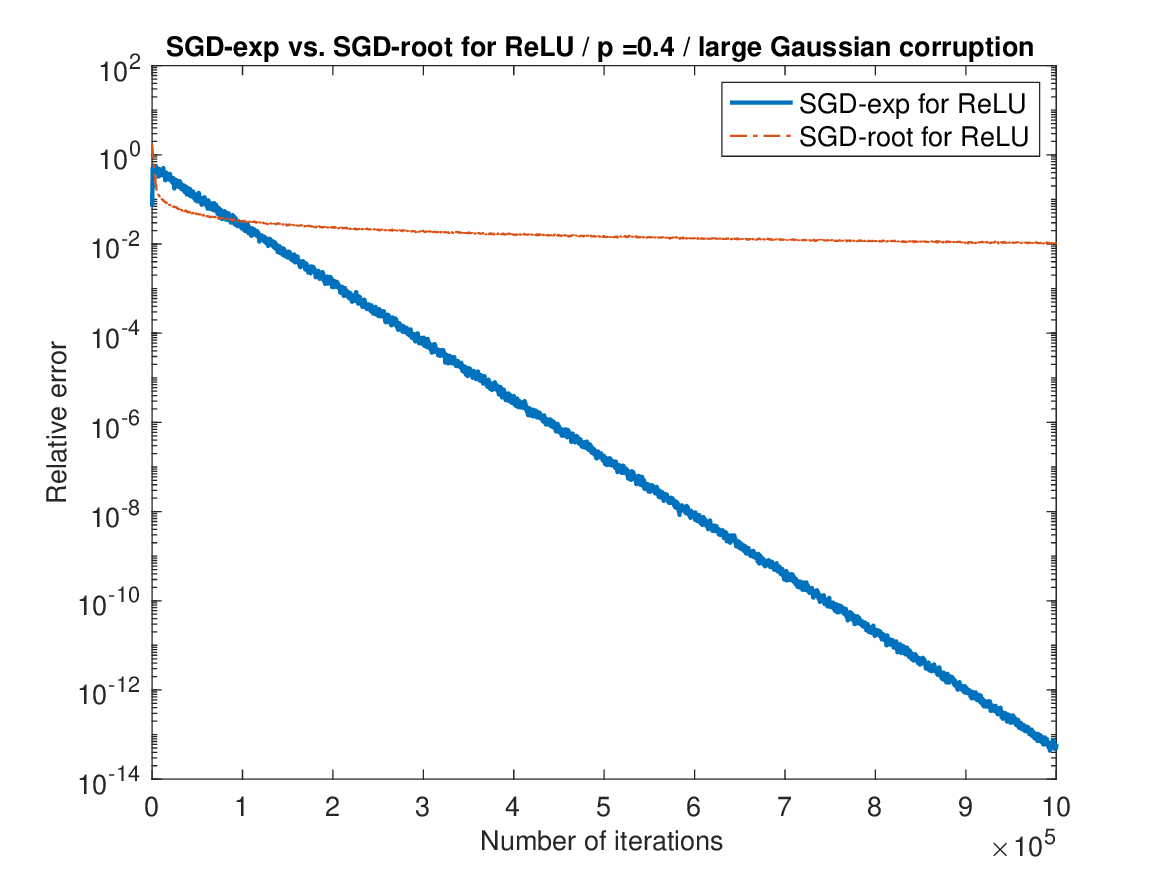}

		\caption[] {ReLU regression with SGD-exp and SGD-root on streaming Gaussian data with corruption probability $p = 0.4$: sign-flip corruptions (left) and large Gaussian corruptions (right). For SGD-exp, step size scales $\lambda = 1.00003$.

        In both cases, SGD-exp offers linear convergence whereas SGD-root fails to converge to the true parameter.  
		} 
        \label{fig:Comparision_SGD-exp_SGD-root}
    \end{figure}
    One might also try different step size scheduling such as the square-root decay step size instead of the exponentially decaying step size in the iteration equation \eqref{eq:main_iteration-relu} of SGD-exp for robust ReLU regression. In fact, the iteration equation \eqref{eq:main_iteration-relu} with square-root decay step size can be viewed as an extension of SGD-root \cite{pesme2020online} to ReLU regression. However, our findings suggest that this extension of SGD-root does not converge to the true parameter as shown in Figure \ref{fig:Comparision_SGD-exp_SGD-root} for sign-flip/large Gaussian corruption, whereas SGD-exp provides linear convergence for both types of corruption. This indicates that the exponential decaying step size in SGD-exp is a correct choice for robust ReLU regression under the Massart noise model.

    \subsection{SGD-exp on real datasets}
    In this section, we demonstrate the effectiveness of SGD-exp for linear and ReLU regression using real datasets with corrupted data. We apply our methods on two datasets, Red Wind Quality Data and Lending Club Loan Data and record the average loss values over $10$ trials. 
    
    \subsubsection{Red Wine Quality Data}
    We use Red Wine Quality, a popular dataset for linear regression. The dataset consists of 1599 samples with physicochemical covariates and sensory response variable. Among the covariate variables, we only use 10 numerical ones for simplicity. The $10$ features are  \verb|fixedAcidity|, \verb|volatileAcidity|, \verb|citricAcid|, \verb|residualSugar|, \verb|chlorides|, \verb|freeSulfurDioxide|, \verb|density|, \verb|pH|, \verb|sulphates|, \verb|alcohol|. We centralize/normalize these features and corrupt the response variable by adding large random Gaussian noise drawn from $[-300, 300]$ with probability $p = 0.2$. The parameter $\lambda = 1.006$. A similar corruption model with real dataset has been used in \cite{haddock2022quantile}. 

To simulate the i.i.d. samples, we randomly select one of the data point from the dataset at each iteration of SGD-exp.
    Because typically there is no such reasonable ground truth vector $\xb$ for the real datasets, we measure the performance of methods using the $\ell_2$ loss function associated uncorrupted dataset instead. 
    In Figure \ref{fig:SGD-exp_real_dataset}, we record the $\ell_2$ loss of the uncorrupted dataset, ${1 \over m} \|A\xb - \tilde{{\bf y}} \|^2$, where $\tilde{{\bf y}}$ is the corresponding uncorrupted response vector. 
    
    The loss value after one pass of the corrupted dataset using SGD-exp for linear regression is about \verb|0.463|. For smaller scale of noise, drawn from $[-3,3]$, a similar plot to that in Figure \ref{fig:SGD-exp_real_dataset} is obtained, which is omitted here, where we have obtained the loss value \verb|0.450|.  These values are close to the optimal value \verb|0.4220| which can be obtained by running the conventional linear regression on the uncorrupted dataset. If we run the linear regression on the corrupted dataset, the loss value is over \verb|32.11|, much higher than the one associated with the uncorrupted dataset. 

    As for robust ReLU regression, we obtain a similar plot for SGD-exp for ReLU regression, whereas GLM-Tron suffers from the corruption as shown in Figure \ref{fig:Comparison_real_dataset_relu}.
    
 \begin{figure} [h]
      \centering

     \begin{minipage}[b]{0.44\textwidth}
        \includegraphics[width=1 \textwidth]{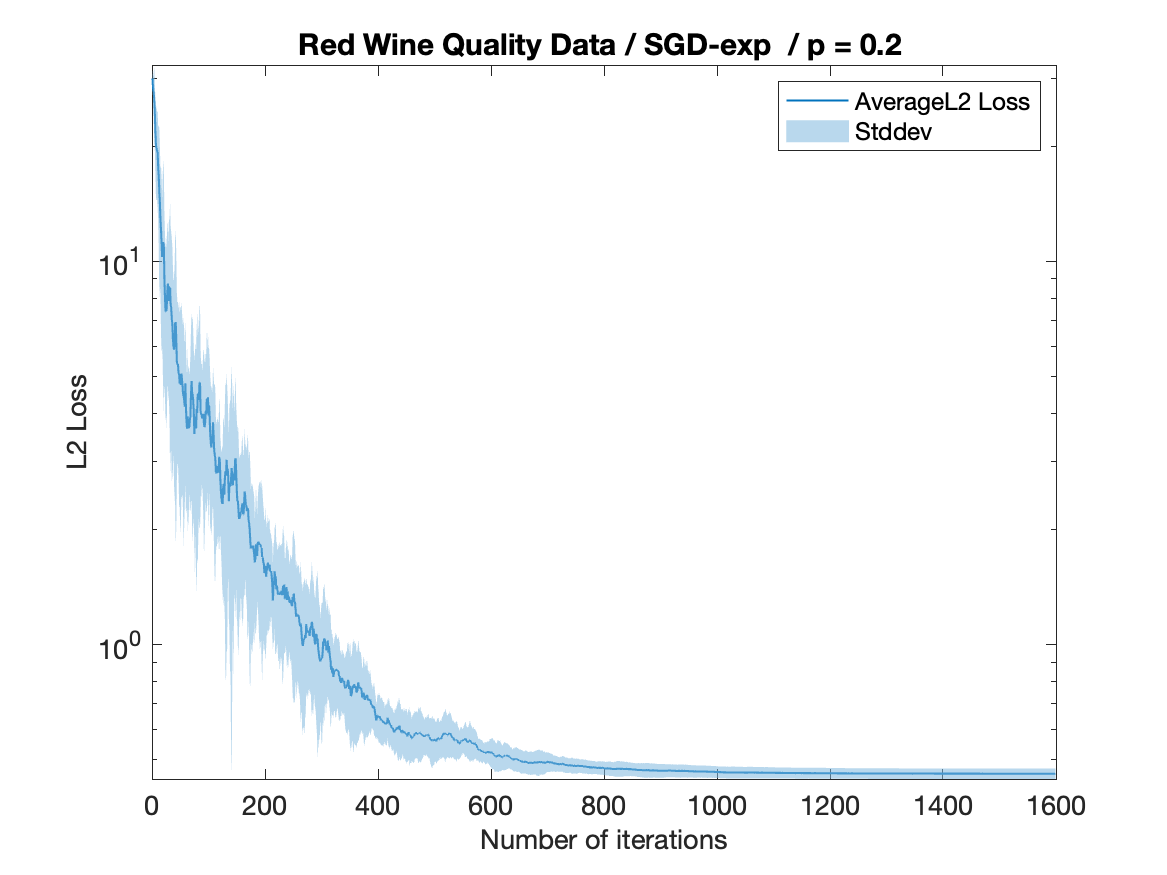}
      \end{minipage}
    \hspace{1cm}
      \begin{minipage}[b]{0.44\textwidth}
        \includegraphics[width=1 \textwidth]{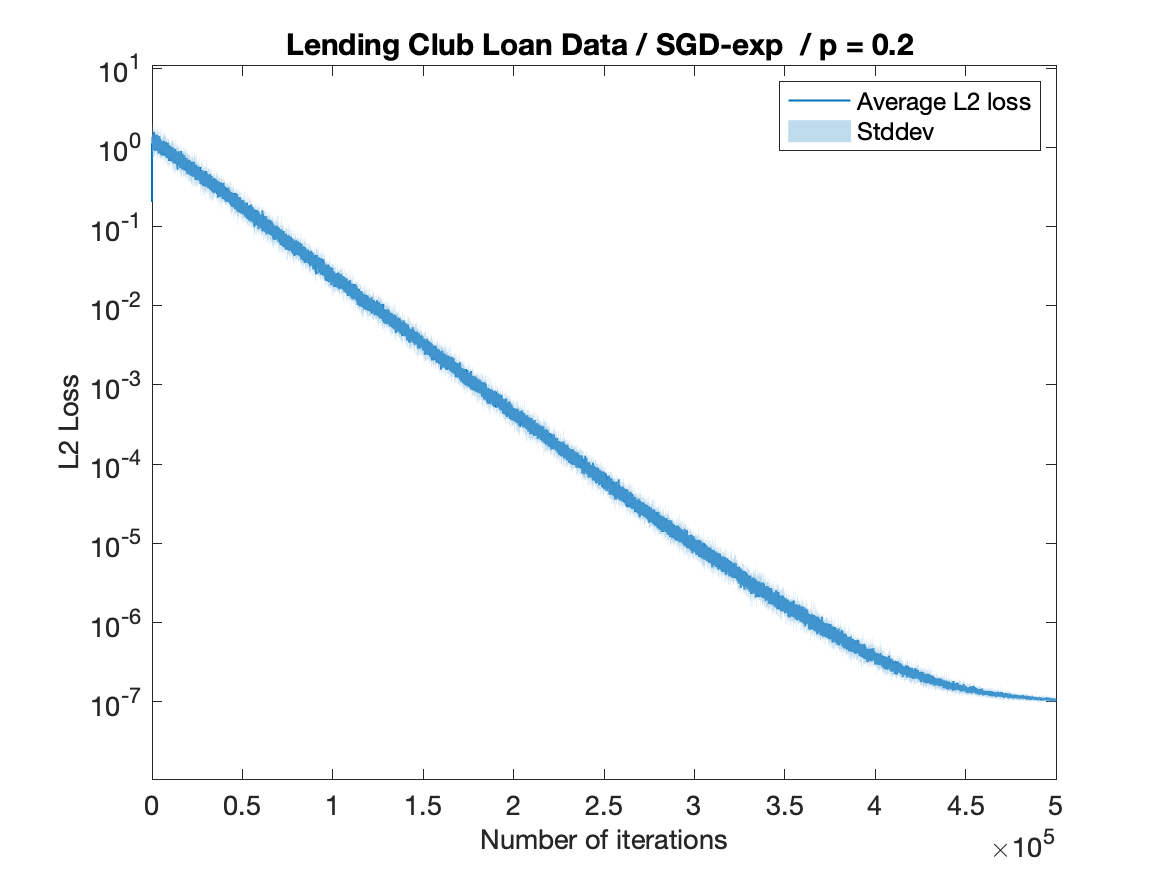}
      \end{minipage}
      
           \caption[] {Linear regression with SGD-exp on the Red Wine Quality dataset with the corruption probability $p = 0.2$ and $\lambda = 1.006$ (left) and on the Lending Club Loan dataset with the corruption probability $p = 0.2$ and $\lambda = 1.00002$ (right). The plots are averaged over $10$ runs.
}
        
    \label{fig:SGD-exp_real_dataset}
    \end{figure}
    \begin{figure} 
      \centering
      \includegraphics[width=0.45 \textwidth]{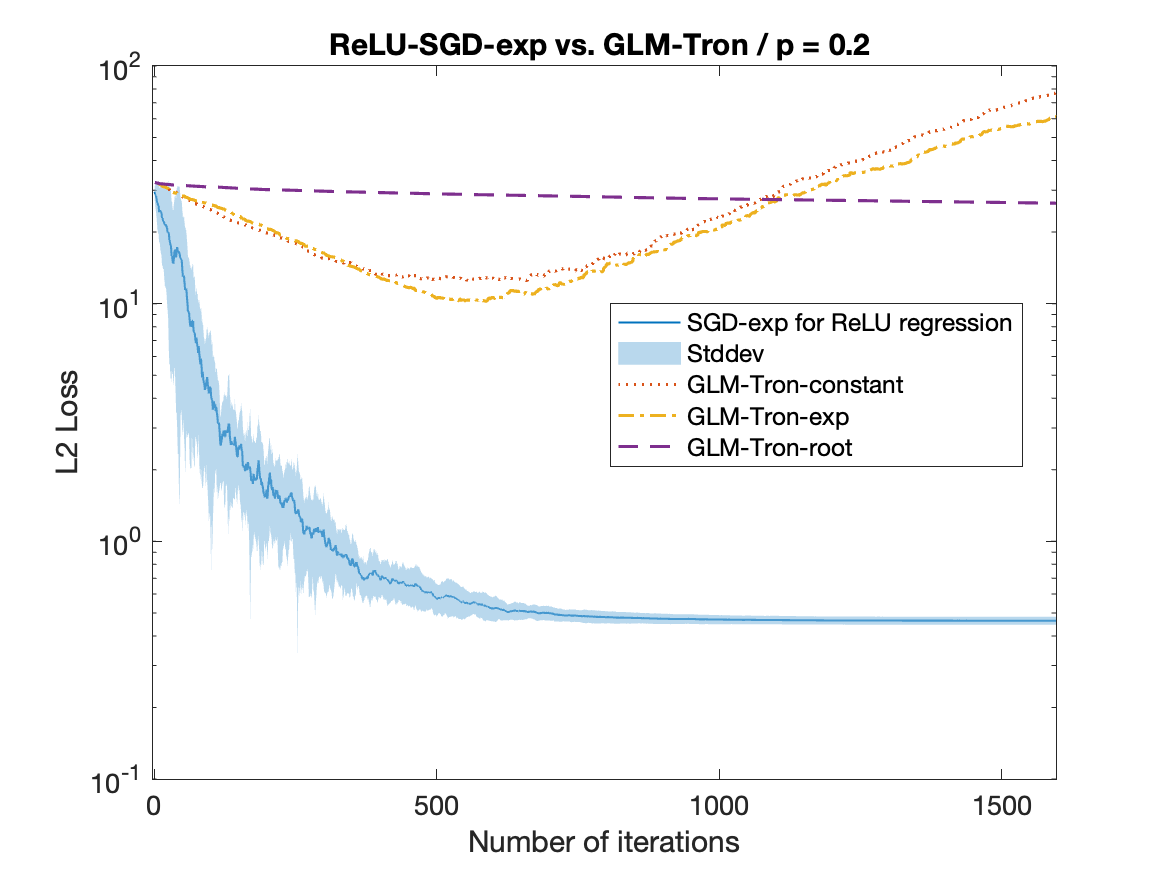}

           \caption[] {ReLU regression for the Red Wine Quality dataset with large Gaussian corruption with $p = 0.2$: SGD-exp with $\lambda = 1.006$ obtains low loss, whereas stochastic GLM-Tron with various step sizes suffers from noise. The number of samples in the data set $m = 1599$. Here, GLM-Tron-exp employs exponentially decaying step size $1.00003^{-k}/m$, GLM-Tron-const has step size $1/m$ and GLM-Tron-root has step size $k^{-1/2}/m$. The plots are averages over $10$ runs.     

  }\label{fig:Comparison_real_dataset_relu}
    \end{figure}

\subsubsection{Lending Club Loan Data}
Lending Club Loan dataset comprises $10000$ samples with loan data from $2007$ to $2015$ issued by Lending Club, a lending company in the US. There are several features in the data such as interest rates, loan amounts, balances, and so on. We use the first $34$ features in the dataset and the response variable is \verb|paid_total|. We normalize the feature/response vectors in the dataset, which is the typical preprocessing step for many machine learning algorithms. 

As before, the data points are randomly drawn from the dataset at each iteration of SGD-exp.
Under the random corruption model with probability $p=0.2$ by adding Gaussian noise drawn from $[-3,3]$ to the response variable and setting $\lambda = 1.00002$, we record the $\ell_2$ loss of SGD-exp in Figure \ref{fig:SGD-exp_real_dataset}. 
The loss value after $50,000$ iterations of SGD-exp is \verb|1.0116e-06|, which is the optimal value by running the linear regression on the uncorrupted dataset. On the other hand, the $\ell_2$ loss of linear regression on the corrupted dataset that is obtained by a direct solver is \verb|0.0017|. This demonstrates the effectiveness of our method, SGD-exp, in handling random corruption.

\section{Conclusion}\label{sec:conclusion}
In this paper, we have introduced Stochastic Gradient Descent with Exponential Decay (SGD-exp) for linear and ReLU regression in streaming settings under the presence of semi-random adversarial corruptions. Through theoretical analysis and numerical experiments, we have established that SGD-exp offers near-linear convergence rates for corruption probabilities less than $0.5$ for the Massart model and $1$ for the symmetric oblivious corruption model, optimal for both cases. Future research avenues include exploring SGD-exp's application to other robust optimization problems, further refining the convergence analysis under different noise models, and extending the framework to accommodate additional forms of non-linearity and constraints.

    \section*{Acknowledgements}
DN is supported by NSF DMS 2408912. ER is supported by the NSF grant DMS-2309685.

	\bibliographystyle{plain}
	\bibliography{arXiv_version_final}

\end{document}